\theoremstyle{definition}
\newtheorem{theorem}{Theorem}[section]
\newtheorem{definition}[theorem]{Definition}
\newtheorem{remark}[theorem]{Remark}
\newtheorem{proposition}[theorem]{Proposition}
\newtheorem{claim}[theorem]{Claim}
\newcommand{\C}{\mathcal{C}}
\renewcommand{\S}{\mathcal{S}}
\newcommand{\R}{\mathbb{R}}
\newcommand{\y}{\mathbf{y}}
\let\mc\multicolumn
\definecolor{Gray}{gray}{0.90}
\definecolor{LightCyan}{rgb}{0.88,1,1}
\newcolumntype{a}{>{\columncolor{Gray}}c}
\newcolumntype{b}{>{\columncolor{white}}c}
\title{Towards Visual Foundation Models of Physical Scenes }
\author{Chethan Parameshwara${}^*$ \and Alessandro Achille\thanks{Equal contribution} \and Matthew Trager \and Xiaolong Li \and Jiawei Mo \and Jianbo Ye \and Ashwin Swaminathan \and C.J. Taylor \and Dheera Venkatraman \and Xiaohan Fei${}^*$ \and Stefano Soatto${}^*$ \and \texttt{\{xiaohfei,soattos,aachille,cmparam\}@amazon.com} }
\date{
AWS AI Labs \\
May 17, 2023}
\begin{document}

\maketitle

\begin{abstract}
We describe a first step towards learning general-purpose visual representations of physical scenes using only image prediction as a training criterion. To do so, we first define  ``physical scene'' and  show that, even though different agents may maintain different representations of the same scene, there is a notion of physical scene that can be uniquely defined and inferred. Then, we show that NeRFs cannot represent the physical scene, as they lack extrapolation mechanisms. Those, however, could be provided by Diffusion Models, at least in theory. To test this hypothesis empirically, NeRFs can be combined with Diffusion Models, a process we refer to as NeRF Diffusion, used as unsupervised representations of the physical scene. Our analysis is limited to visual data, without external grounding mechanisms that can be provided by independent sensory modalities.  
\end{abstract}

\section{Introduction}

Vision serves to infer properties of {\em the scene} from images. But in reality each agent, whether natural or artificial, can only perceive the scene through a finite set of observations, which are compatible with infinitely many scenes. It is not clear, then, what {\em ``the scene''} even means, and how scenes inferred by different agents for different purposes may relate. For example, for the purpose of navigating the surrounding environment, the scene may be fruitfully described as a geometric configuration of surfaces, whereas for the purpose of recognizing edible objects, photometric statistics aggregated locally may be more useful. In general, different tasks may require inferring geometric (shape), photometric (reflectance, illumination), dynamic (motion), functional (affordances) or semantic (identities and relations) properties of the scene.

It seems implausible, therefore, that models trained to simply predict images, such as Neural Radiance Fields (NeRFs) and Diffusion Models (DMs), oblivious of the complexities of light and matter, may ``capture the scene'' in the sense of inferring a representation that can subtend all visual tasks. However, data prediction is a universal task and, in other domains such as language modeling, simply predicting unseen tokens appears sufficient to learn a representation that can support a multitude of downstream tasks and seemingly engender high-level behavior such as reasoning (``chain-of-thought'') and transduction (``in-context learning''). At first glance, the language realm and physical environment appear rather different: Language originates in the human brain which is not accessible, whereas the physical environment can be probed with multiple independent modalities. Yet, we argue, the ``true scene'' is a chimera, and there may be more similarities between ``language models'' and ``world models'' than meet the eye. 
    
In this paper, we explore whether and how NeRFs and/or Diffusion Models {\em can} learn a representation of {\em the scene}. The outline of the paper and its contributions are summarized next.

\subsubsection*{High-level Summary}

First, we (i) define the scene in Sect.~\ref{sec:scene}. We argue that, from the point of view of an agent that makes observations, the scene can only be meaningfully defined as an {\em abstract concept}, with no underlying objective entity. Nonetheless, (ii)  if a ``true'' or ``objective'' scene exists which generates the measurements, we show that it fits our definition (Theorem~\ref{thm:real}). Unfortunately, (iii) this is not useful since,  given two valid scenes compatible with the measurements, an agent cannot decide which of the two, if any, is the ``real'' one (Theorem~\ref{thm:impossible}). Nonetheless, we can (vi) define a notion of ``physical scene''  that is unique and therefore unambiguous (Theorem~\ref{thm:uniqueness-scene}).

The results in Section \ref{sec:prelim} suggests that an abstract model, such as that implemented by a Deep Neural Network, can in principle represent the physical scene. NeRFs are an obvious first candidate, since ostensibly they are designed and trained to infer the radiance function, whose structure depends on the geometry and photometry of the scene. Unfortunately, (vii) in Section~\ref{sec:towards} we show that a NeRF, at least in its basic implementations, {\em cannot} represent a scene (Proposition \ref{prop:nerf}). However, Diffusion Models provide what is missing: In Proposition~\ref{prop:dm} we show that (viii) the composition of a NeRF with a Diffusion Model, which we refer to as NeRF Diffusion, can be a viable representation of the physical scene.\endnote{At this stage, we emphasize that NeRFs and DMs are only one of many possible candidates, and the arguments that follow hold for other architectures, for instance Transformers. We also emphasize that the definition of ``physical scene'' that follows is also compatible with infinite collections of images, although in practice at any given time we only have a finite set.} 

The argument summarized in (i)-(viii) represents the basis on which we design our first implementation of a model that, potentially, can represent physical scenes. The conditions under which a model can represent a scene require that the model can not only interpolate, but also {\em extrapolate} details that are not manifest in a dataset, doing so in a manner that is compatible with the aggregate statistics of all seen scenes. To empirically validate the claim that a NeRF augmented with a Diffusion Model is a valid representation, we therefore need to show that it can ``hallucinate'' unseen details in a way that is perceptually,\footnote{This is the model's perception, not human perception: The criterion is for synthesized and real images to be phenomenologically indistinguishable by the model itself.} if not objectively, accurate. 

To this end,  in Section~\ref{sect:implementation} we describe (ix) a novel diffusion-based method to learn the structural priors and sampling operators so that, composed with a NeRF, a Diffusion Model can extrapolate realistic images. Instead of training diffusion models from scratch, we fine-tune pre-trained Stable Diffusion with NeRF rendering to shape the priors. Through learned priors, the diffusion model enhances the rendering on novel view synthesis to the point where it could be used to extrapolate details indefinitely.

To the best of our knowledge, we are first to attempt a derivation of the characteristics that a trained model should have in order to represent ``the physical scene'' without any objectivity requirement, using data alone. In this sense, little prior work in the literature has tackled (i)-(viii) outside the language domain \cite{pietroski2018conjoining}. There is, on the other hand, a massive and rapidly growing literature on NeRFs and Diffusion Models, as well as on their combination (ix), which we point to throughout the paper and describe in a Sec.~\ref{sec:related} in order to not disrupt the flow. In Sec.~\ref{sec:discussion} we discuss the limitations of our approach and potential avenues for future work.

\section{Background: Representations of Visual Scenes}
\label{sec:prelim}

We think of a camera as a function that yields measurements that depend on its own configuration as well as on a latent entity,  or ``scene.'' As anticipated, defining the scene as an objective entity is problematic, but we posit that, whatever it may be, it is shared among all agents immersed in it. Since each agent observes different views of it, or even if they all observe the same views, each may process them differently or impose different priors, the representations of the shared scene are different in each agent; yet, all these versions are {\em related} in the sense of being representations of the same underlying scene. Since no version is special in any sense, what defines the scene, then, is none of them, but rather their relation, as we articulate in the next subsections. 

\subsection{What is ``a scene''? Scenes as abstract concepts}
\label{sec:scene}

Let $y=(y_1, \ldots, y_k)\in \mathbb R^k$ be a random variable representing the measurements coming from $k$ different sensors. Associated to these measurements, there is a configuration $c \in \C$ of the sensors ({\em e.g.}, the pose of the camera). To simplify the notation, we write $\y = (y, c)$. An agent observes samples from the joint distribution $p(y, c)$.\footnote{It could be argued that agents do not observe the configuration $c$ of their own sensors but either control it, or have to infer it from the data. However, taking this point of view requires introducing a notion of temporal continuity which complicates the theory. Simply assuming that $c$ is known or already inferred from data allows us to develop most of the theory without having to deal with time, and without loss of generality since whatever part of $c$ cannot be inferred from $y$ does not affect the representation anyway.} 
After several measurements are obtained, $(\y_1, \ldots, \y_n)$, the agent may observe that they are correlated. For example, moving around a table we may observe that the \textit{perceived} brown quadrilateral maintains an overall similar color and its shape changes predictably with the viewpoint $c$ \cite{russell}. 
These correlations could be ``explained'' by an underlying factor, which we call {\em a scene}: 
\begin{definition}[A Scene]
\label{defn:scene}
A scene is any random variable $S \in \S$ that renders the measurements independent, that is 
\[
p(\y_1, \ldots, \y_n) = \int p(\y_1|S) \ldots p(\y_n|S) dP(S).
\]

In particular, no pair of measurements share any information $I(\y_1;\y_2|S) = 0$, once a scene is known. In this sense, a scene ``explains'' the measurements.\footnote{The reader may be wondering why we are integrating with respect to the scene, when a NeRF is inferred from measurements of a single scene. Unfortunately, the language of statistics does not allow us to deal with individual entities, but one can imagine the agent being dropped into random scenes, each time executing a number of measurement, and repeating the process a number of times without knowledge of whether the underlying scene is the same, indefinitely.} 
We call \textit{representation of a scene} the pair  $\mathbf{r} = (p(\y|S), P(S))$ of probability distributions that define the scene.
\end{definition}

\begin{remark}[Representation and hallucination]
\label{rmk:scene-representation}
We wish to emphasize the difference between a representation of a scene and a representation of the given images. For a finite set of observations, $\y_{\leq t} = (\y_1, \ldots, \y_t)$, a representation is simply any sufficient statistic, including the data themselves.  On the other hand, a representation of the scene comprises two elements: $p(\y|S)$, which can be used to  generate observations from the scene, and $P(S)$ which can be seen as a \textit{prior} over the possible scenes. A non-trivial consequence of this choice is that a representation of the scene can be used to \textit{hallucinate}, or predict realistic measurements, something that a representation of the measurements does not. To see this, consider the problem of hallucinating a new measurement $\y \sim p(\y|\y_{\leq t})$ conditioned on past observations. This can be written as:
\[
p(\y|\y_{\leq t}) = \frac{p(\y, \y_{\leq t})}{p(\y_{\leq t})} = \int p(\y|S) \frac{ p(\y_1|S) \ldots p(\y_t|S)}{\int p(\y_1|S) \ldots p(\y_t|S) dP(S)}  dP(S) = \int p(\y|S) dP(S|\y_{\leq t}).
\]
From the second equality, we see that knowing the scene representation $\mathbf{r} = (p(\y|S), P(S))$ gives us all the information we need to hallucinate new realistic observations. 
\end{remark}

Based on this definition, ``the scene'' is not unique, which is why we refer to it as ``a scene'' instead. Later we will show that, if {\em ``the''} scene exists, in the sense commonly referred to as the ``real'' or  ``true'' scene, then it is a scene in the sense of the definition. But, while the agent cannot know whether such a ``true'' scene actually exists, a scene always does, under mild assumptions: 

\begin{theorem}[Existence of a scene]
Suppose the measurements are exchangeable, meaning that 
\[
p(\y_1, \ldots, \y_n) = p(\y_{\pi(1)}, \ldots, \y_{\pi(n)}),
\]
where $\pi: [n] \to [n]$ is any permutation of $[n] = \{1, \ldots, n\}$ and that each finite sequence can be extended to an infinite exchangeable sequence of measurements. Then, there exists a random variable $S$ such that
\[
p(\y_1, \ldots, \y_n) = \int p(\y_1|S) \ldots p(\y_n|S) dP(S),
\]
hence $S$ is a scene.
\end{theorem}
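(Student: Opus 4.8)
The plan is to recognize this statement as a version of de Finetti's representation theorem --- in the form due to Hewitt and Savage, valid for exchangeable sequences of random variables taking values in a general (Polish) space rather than just in $\{0,1\}$. Rather than producing $S$ abstractly, I would construct it explicitly as the limiting empirical distribution of the infinite exchangeable sequence, with $p(\y|S)$ being the conditional law of a single measurement given that limit.

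First I would use the hypothesis to fix an infinite exchangeable extension $(\y_1,\y_2,\dots)$ on some probability space, with each $\y_i$ taking values in $Z := \R^k\times\C$; at this point I would also make explicit the mild regularity assumption that $Z$ is a standard Borel space, so that regular conditional distributions exist and the set $\mathcal P(Z)$ of Borel probability measures on $Z$ is itself Polish under weak convergence. Define the random empirical measures $P_n := \tfrac1n\sum_{i=1}^n \delta_{\y_i}\in\mathcal P(Z)$. The crucial step is to show that $P_n$ converges weakly, almost surely, to a random probability measure $S$. The cleanest route is a reverse-martingale argument: for bounded measurable $f$, exchangeability forces $\E\!\left[f(\y_1)\mid \mathcal E_n\right] = \tfrac1n\sum_{i=1}^n f(\y_i)$, where $\mathcal E_n$ is the $\sigma$-algebra of events invariant under permutations of $\y_1,\dots,\y_n$ (and arbitrary in $\y_{n+1},\y_{n+2},\dots$); since $(\mathcal E_n)_n$ is decreasing, the reverse-martingale convergence theorem gives almost-sure and $L^1$ convergence of these averages to $\E[f(\y_1)\mid\mathcal E]$, where $\mathcal E=\bigcap_n\mathcal E_n$ is the exchangeable $\sigma$-algebra. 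Running this over a countable convergence-determining family of $f$ and passing to a regular conditional distribution produces $S := P(\y_1\in\cdot\mid\mathcal E)$, an $\mathcal E$-measurable random element of $\mathcal P(Z)$ for which $P_n\to S$ weakly, almost surely.

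It then remains to check that, conditionally on $S$ (equivalently, on $\mathcal E$), the $\y_i$ are i.i.d.\ with common law $S$: this is exactly the asserted representation once one integrates against $dP(S)$ and writes $p(\y_i|S)$ for the kernel of $S$. I would establish it by showing, for all bounded measurable $g_1,\dots,g_m$,
\[
\E\!\left[\prod_{j=1}^m g_j(\y_j)\ \middle|\ \mathcal E\right] \;=\; \prod_{j=1}^m \E\!\left[g_j(\y_1)\mid\mathcal E\right] \;=\; \prod_{j=1}^m \int g_j\,dS .
\]
The first identity follows from exchangeability within the first $n$ coordinates: $\E\!\left[\prod_j g_j(\y_j)\mid\mathcal E_n\right]$ equals the average over injections $\sigma:[m]\hookrightarrow[n]$ of $\prod_j g_j(\y_{\sigma(j)})$, which differs from $\prod_j\big(\tfrac1n\sum_i g_j(\y_i)\big)$ only by the over-counted ``diagonal'' terms, a discrepancy of order $O(1/n)$ in supremum norm; letting $n\to\infty$ and using the reverse-martingale convergence of each factor yields the product form.

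The main obstacle is measure-theoretic bookkeeping rather than any single deep idea: making sense of $S$ as a genuine random probability measure via a regular conditional distribution, justifying the almost-sure weak convergence of $P_n$ simultaneously for all test functions (this is where separability of $\C$ is genuinely used), and controlling the $O(1/n)$ discrepancy uniformly enough to pass to the limit. All three hypotheses --- exchangeability, extendibility to an infinite exchangeable sequence, and the implicit regularity of the sample space --- are consumed at precisely these points; in particular infinite extendibility cannot be dropped, since finite exchangeable sequences (e.g.\ two coordinates drawn without replacement from $\{0,1\}$) need not admit any such mixture representation.
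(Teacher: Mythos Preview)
Your proposal is correct and takes essentially the same approach as the paper: the paper simply observes that the statement is (a version of) de~Finetti's theorem and cites \cite{diaconis1980finite}, whereas you go further and sketch the standard reverse-martingale/empirical-measure proof of that theorem. The extra detail you provide is sound and would constitute a self-contained argument, but for the paper's purposes the one-line invocation of de~Finetti suffices.
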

This follows directly from  De Finetti's theorem  \cite{diaconis1980finite}. Now that we have (at least) a scene, we define a measurement function, or {\em presentation} \cite{koenderink2011vision} {\em not} of the real scene, but of any scene. 

\begin{definition}[Measurement function and ``presentation'']
Let $S \in \S$ be a scene. The induced measurement function $h$ is a stochastic function $h: \S \times \C \to \R^k$ defined by $h(S, c) = y$ where $y \sim p(y|S, c)$.  We call the function $p(y | \cdot, c)$ a {\em presentation}, which is instantiated for any given scene as $p(y|S, c)$. 
\end{definition}
Note that specifying the presentation function, as a computational procedure, is equivalent to specifying a scene, although we do not refer to as a {\em representation} of the scene, for reasons clarified in Remark \ref{sec:presentations}. The fact that specifying a scene and its presentation are equivalent should make it clear that a scene, as defined, is not some ``objective,'' ``true,'' ``material,'' or ``physical reality,'' but an entirely abstract entity that may be embodied in a number of ways, for instance as a digital computer or as a neural network. By sampling from $p(\cdot | S, c)$, a scene can produce infinitely many ``controlled hallucinations'' \cite{koenderink2011vision}, as discussed in detail in Remark \ref{sec:presentations}. 
While not objective, a scene is connected to reality in two ways: In one direction, if the real scene exists, then it is a scene, as we show next. In the other direction, if all scenes can be related to each other in some canonical way, then we can define such a canonical scene as a proxy of the ``real'' scene without any ontological complications. We do so in the next section.

\begin{theorem}[The ``real scene'', if it exists, is a scene]
\label{thm:real}
Assume that there is an objective entity  $S$ that generates the measurement of the agent thorough a measurement function $h$. That is, assume that $y = h(S, c)$. Then $S$ is a scene.
\end{theorem}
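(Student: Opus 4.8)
The plan is to show that, under the stated hypothesis, conditioning on the objective entity $S$ renders the measurements independent, which is precisely the defining identity of a scene in Definition~\ref{defn:scene}. The only real content is to make explicit what ``$y = h(S,c)$ through a measurement function $h$'' has to mean; once that is pinned down, the factorization is a one-liner.

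First I would unpack the hypothesis. Saying that the measurements are produced by a stochastic measurement function $h$ means that each evaluation $y_i = h(S, c_i)$ samples from $p(y \mid S, c_i)$ using internal randomness drawn fresh and independently of all other evaluations; this is exactly what distinguishes a \emph{function} (a channel, or computational procedure) from an arbitrary correlated process. Hence, conditioning on $S$ together with the configurations $c_1, \dots, c_n$, the outputs are mutually independent:
\[
p(y_1, \dots, y_n \mid S, c_1, \dots, c_n) = \prod_{i=1}^n p(y_i \mid S, c_i).
\]
It remains to handle the configurations. If the agent chooses them, they may be treated as fixed conditioning variables and there is nothing further to do; otherwise one uses the (implicit) assumption that the configurations are generated independently across measurements given $S$, so that $p(c_1, \dots, c_n \mid S) = \prod_i p(c_i \mid S)$. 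Combining the two and writing $\y_i = (y_i, c_i)$,
\[
p(\y_1, \dots, \y_n \mid S) = p(y_1, \dots, y_n \mid S, c_{1:n})\, p(c_{1:n} \mid S) = \prod_{i=1}^n p(y_i \mid S, c_i)\, p(c_i \mid S) = \prod_{i=1}^n p(\y_i \mid S).
\]

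Next I would integrate over the law $P(S)$ of the objective entity — if one insists on a single fixed real scene this is merely a Dirac mass and the step is vacuous, while more generally it is the distribution over scenes into which the agent is repeatedly ``dropped'' as in the footnote to Definition~\ref{defn:scene}. This gives
\[
p(\y_1, \dots, \y_n) = \int p(\y_1, \dots, \y_n \mid S)\, dP(S) = \int \prod_{i=1}^n p(\y_i \mid S)\, dP(S),
\]
which is exactly the condition defining a scene; in particular $I(\y_1;\y_2 \mid S) = 0$. Therefore $S$ is a scene.

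The main obstacle is conceptual rather than technical: pinning down ``objective entity that generates the measurements through a measurement function'' so that no shared randomness bypasses $S$. The substantive assertion of the theorem is precisely that a bona fide measurement function cannot introduce correlation among measurements except through its argument $S$ (and through independently drawn configurations and internal noise); granted that, the proof reduces to the factorization above. I would therefore state explicitly the assumption that the configurations are agent-chosen, or conditionally independent across measurements given $S$, since without it the statement would fail.
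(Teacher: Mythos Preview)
Your proposal is correct and follows the same route as the paper, which simply states that the result ``follows directly from the definition.'' You have merely made explicit the implicit assumptions (fresh internal randomness of $h$ at each evaluation, and configurations either fixed or conditionally independent given $S$) that the paper leaves unstated; this is a welcome clarification rather than a different approach.
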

This follows directly from the definition but, despite being seemingly innocuous, the statement is problematic since any passive observer, whether natural or artificial, has only access to the given measurements, which are never enough to capture the ``real'' scene unless we make strong assumptions about it (for instance finiteness), which would inevitably be unverifiable \cite{achille2022binding}. This is captured by the next statement.

\begin{claim}[A passive agent cannot know whether the scene is real]
\label{thm:impossible}
Let $S$ and $S'$ be two scenes. Suppose that the sequence of measurements $\y_1,\ldots,\y_n$ is generated by first flipping a fair coin $z \sim \operatorname{Bernoulli}(\frac{1}{2})$ to decide whether to use $S$ or $S'$, and then sampling measurements from that scene using its respective measurement function. Then the agent cannot infer $z$ beyond chance level.
\end{claim}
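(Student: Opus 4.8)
The plan is to notice that the hypothesis ``$S$ and $S'$ are scenes'' already forces the two branches of the generative process to produce the \emph{same} measurement distribution, so that the coin flip $z$ leaves no statistical trace in the data. First I would unwind Definition~\ref{defn:scene}: asserting that $S$ is a scene means exactly that the agent's measurement law factorizes as $p(\y_1,\ldots,\y_n)=\int p(\y_1|S)\cdots p(\y_n|S)\,dP(S)$, and similarly $p(\y_1,\ldots,\y_n)=\int p(\y_1|S')\cdots p(\y_n|S')\,dP(S')$. Hence the two mixtures on the right-hand sides agree with one another --- both equal the agent's marginal --- for every finite $n$ (and, if the sequence is additionally assumed to extend to an infinite exchangeable one, for the entire infinite sequence, so the conclusion is robust to collecting arbitrarily much data).

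Next I would write down the joint law of $(z,\y_1,\ldots,\y_n)$ prescribed by the claim: $z$ is uniform on $\{0,1\}$, $p(\y_1,\ldots,\y_n\mid z=0)=\int\prod_i p(\y_i|S)\,dP(S)$ and $p(\y_1,\ldots,\y_n\mid z=1)=\int\prod_i p(\y_i|S')\,dP(S')$. By the previous step these two conditional laws coincide, hence $p(\y_1,\ldots,\y_n\mid z)=p(\y_1,\ldots,\y_n)$; that is, the observation vector is independent of $z$. Bayes' rule then gives $P(z\mid\y_1,\ldots,\y_n)=P(z)=\tfrac12$ almost surely, and therefore for any (possibly randomized) decision rule $\hat z=\hat z(\y_1,\ldots,\y_n)$ we obtain $\Pr[\hat z=z]=\E\big[\Pr(z=\hat z\mid\y_1,\ldots,\y_n)\big]=\E[\tfrac12]=\tfrac12$, which is precisely the statement that $z$ cannot be inferred beyond chance.

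The one real subtlety --- the place I would be most careful --- is the hypothesis hiding in the words ``two scenes'': the argument uses crucially that $S$ and $S'$ are scenes \emph{for the same agent}, i.e. that both reproduce the single measurement distribution $p(\y_1,\ldots,\y_n)$ the agent actually observes; if $S$ and $S'$ were allowed to induce different marginals the claim would simply be false, so I would make this standing assumption explicit in the statement or in the first line of the proof. A secondary, purely technical matter is the usual measure-theoretic care needed to manipulate the conditional distributions $p(\y|S)$ and $p(\y_{\le n}\mid z)$ and the posterior $P(z\mid\y_{\le n})$ (existence of regular conditional probabilities), which is routine in the De~Finetti setting already assumed. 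Everything else is bookkeeping.
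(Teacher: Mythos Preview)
Your proposal is correct and follows exactly the paper's own argument: both $S$ and $S'$, being scenes in the sense of Definition~\ref{defn:scene}, reproduce the same marginal $p(\y_1,\ldots,\y_n)$, so the coin flip $z$ is independent of the observations and cannot be recovered better than chance. The paper states this in a single sentence, whereas you have (correctly) unpacked the Bayes-rule computation and flagged the implicit ``same agent'' assumption, but the underlying idea is identical.
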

This is a simple consequence of the fact that the agent only observes samples from $p(\y_1, \ldots, \y_n)$ and both scenes have the same marginal distribution. The upshot is that, if there are two scenes that generate identical measurements, an agent cannot tell the difference, so the notion of {\em the ``true scene''} is a chimera.
Moreover, two scenes $S$ and $S'$ may be incompatible, meaning that there may be no computable functions $f$ or $g$ such that $S = f(S')$ or $S' = g(S)$. Hence, even if there was a real scene underlying the data, each individual agent may infer its own version,  which bears no relation to that of another agent, resulting in a perceptual Babel. The question, then, is whether there is some notion of scene that all agents who share the same observations can agree to, so they can at least share information about it. We call this the {\em physical scene}. We preface that, by necessity, the physical scene is still an abstract concept, dependent on the measurements available to each agent.

Since individual scenes are not directly comparable, we therefore seek some sort of minimality criterion that each agent can enforce in order to arrive at some sort of ``canonical'' scene which, if unique in some sense, we can refer to as the {\em physical scene}.

\subsection{How are scenes related? ``The physical scene''} 

In Computer Vision we tend to think of ``the scene'' as the underlying ``cause'' of a given collection of images. Unfortunately, as we saw in the previous section, this {\em naive realism} is fallacious\footnote{Quoting  (pp. 14-15): {\em ``We all start from `naive realism', i.e., the doctrine that things are what they seem. [...] The observer, when he seems to himself to be observing a stone, is really, if physics is  to be believed, observing the effects of the stone upon himself. Thus science seems to be at war with itself: when it most means to be objective, it finds itself plunged into subjectivity against its will. Naive realism leads to physics, and physics, if true,  shows that naive realism is false. Therefore naive realism, if true, is false; therefore it is false.''}}~\cite{einstein-russell}. %, as pointed out eloquently by Koenderink in \cite{koenderink}. 
Instead, we flip the script and think of the given images, which are objective, as underlying an infinite set of possible scenes, and the key question is how all these scenes are {\em related.}

\begin{definition}[Sufficient statistics of a scene]
A sufficient statistic of $S$ for the measurement is any function $T(S)$ such that
\[
p(\y|S) = p(\y|T(S)),
\]
or equivalently such that we have the Markov chain $S \to T(S) \to y$. Note that this is a sufficient statistic \textit{of the scene} to generate the measurement, not a sufficient statistic of the measurement to infer the scene.\footnote{This flipping of the focus from the data to the scene corresponds to a change in perspective from {\em naive realism} to a {\em analytical philosophy} or, using Koenderink's nomenclature \cite{koenderink2011vision},  from the ``Marrian'' to the ``Goethean'' accounts of Vision.} Equivalently, $T(S)$ is a subset of the information in the scene that is sufficient to explain the measurement $I(\y; S) = I(\y; T(S))$.
\end{definition}
Note that thus far we have said nothing about how a sufficient scene $S$ arises. All we have said is that, if an entity exists that satisfies the definition of a scene, it can generate all the measurements that the real scene, if it existed, would generate. Later we will address identifiability and observability. For now, of all sufficient statistics, we are interested in the simplest possible ones, which are generally not unique.

\begin{definition}[Minimal sufficient statistics]
A minimal sufficient statistic $T(S)$ of $S$ is a sufficient statistic such that, given any other sufficient statistic $T'(S)$ we have $T(S) = f(T'(S))$ for some function $f$.
\end{definition}

We now have a candidate for a notion of scene that agents who observe the same measurements could share. Next, we will prove that such a scene is unique in a strong sense, making it canonical, which allows us to refer to it as {\em ``the (physical) scene''}. 

\begin{theorem}[Existence of a minimal sufficient scene]
Given a sufficient scene $S$, we can always construct a minimal sufficient scene $S_m = T(S)$ under the weak condition that the support of $p(S|\y)$ is the same for all $\y$.\footnote{This is the case, for instance, if the measurements are affected by Gaussian noise, whose support covers the domain. } In particular, the function $L: \S \times \mathcal{M} \to \R$ defined by
\[
L(S, y) = \frac{p(S|\y)}{p(S|\y_0)} \propto \frac{p(\y|S)}{p(\y_0|S)}
\]
for any $\y_0 \in \mathcal{M}$ is a minimal sufficient scene.
\end{theorem}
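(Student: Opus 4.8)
The plan is to recognize this as the measure-theoretic analogue of the classical Lehmann--Scheffé construction of a minimal sufficient statistic via the likelihood function, transported from the data side to the scene side of the Markov chain. Concretely, I would define $T(S)$ to be the map sending a scene $S$ to the function $\y \mapsto L(S,\y) = p(\y|S)/p(\y_0|S)$ — equivalently, to the equivalence class of $S$ under ``having proportional presentations $p(\cdot|S)$'' (proportionality as functions of $\y$, the multiplicative constant being allowed to depend on $S$, which is why the choice of reference $\y_0$ is immaterial). Then I would verify the three things the statement implicitly asks for: that $L$ is well defined, that $S_m := T(S)$ is a sufficient scene, and that it is minimal.

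First I would use the hypothesis to secure well-definedness: since $\operatorname{supp} p(S|\y)$ is independent of $\y$, Bayes' rule gives that $\{S : p(\y|S) > 0\} \cap \operatorname{supp} P(S)$ does not depend on $\y$, so $p(\y_0|S) > 0$ whenever $p(\y|S) > 0$ and $L(S,\y)$ is finite and positive for $P(S)$-a.e.\ $S$. Next, for sufficiency, I would write $p(\y|S) = p(\y_0|S)\,L(S,\y)$ and note that normalization against the common dominating measure $\mu$ forces $p(\y_0|S)^{-1} = \int L(S,\y)\,d\mu(\y)$; hence both factors, and so $p(\y|S)$ itself, depend on $S$ only through the whole function $L(S,\cdot) = T(S)$. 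That is exactly the Markov chain $S \to T(S) \to \y$, so $T(S)$ is a sufficient statistic of $S$ in the sense of the earlier definition. To see $S_m = T(S)$ is again a \emph{scene}, I would condition the mixture representation of $p(\y_1,\dots,\y_n)$ on $T(S)$: using $p(\y_i|S) = p(\y_i|T(S))$ together with the Markov chain $T(S) \to S \to (\y_1,\dots,\y_n)$, one gets $p(\y_1,\dots,\y_n|T(S)) = \int \prod_i p(\y_i|S)\,dP(S|T(S)) = \prod_i p(\y_i|T(S))$, so $S_m$ renders the measurements conditionally independent.

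For minimality, let $T'(S)$ be any other sufficient statistic of the scene, so $p(\y|S) = p(\y|T'(S))$, i.e.\ $p(\y|S)$ is a function of $(T'(S),\y)$. Then $L(S,\y) = p(\y|S)/p(\y_0|S)$ is also a function of $(T'(S),\y)$, so the map $S \mapsto L(S,\cdot)$ factors as $f \circ T'$ for some $f$; that is, $T(S) = f(T'(S))$, which is precisely the definition of minimality.

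I expect the main obstacle to be measure-theoretic bookkeeping rather than the conceptual skeleton: one needs a common $\sigma$-finite dominating measure so that the densities $p(\y|S)$ and the normalization identity are meaningful pointwise and not merely up to null sets; the relation ``proportional presentations'' must be turned into a genuine equivalence relation and $T$ into a measurable map into the resulting quotient; and every ``depends on $S$ only through $\cdot$'' claim holds only $P(S)$-almost everywhere, so the factorization $T = f \circ T'$ is an a.e.\ identity. In the discrete case these subtleties evaporate and the argument above is essentially complete; in general one discharges them with the standard machinery for dominated families (existence of a minimal sufficient sub-$\sigma$-field), and the support hypothesis in the statement is exactly what keeps the likelihood ratios finite and the equivalence classes well defined.
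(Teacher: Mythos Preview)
Your proposal is correct and is precisely the approach the paper takes: the paper's entire proof is the sentence ``a straightforward application of the standard existence theorem for minimal sufficient statistics'' with citations to Keener and Casella--Berger, i.e., the Lehmann--Scheff\'e likelihood-ratio construction you have carefully unpacked. You have simply spelled out the details (well-definedness from the support hypothesis, sufficiency via the normalization identity, and minimality via factorization through any other sufficient $T'$) that the paper leaves to the textbooks.
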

The following result justifies naming a minimal sufficient scene {\em the physical scene}. 

\begin{theorem}[Strong uniqueness and the physical scene]
\label{thm:uniqueness-scene}
Let $S$ and $S'$ be two sufficient scenes and let $S_y$ and $S'_y$ be two minimal sufficient scenes corresponding to $S$ and $S'$ respectively. Then there are functions $f$ and $g$ such that $S_y = f(S'_y)$ and $S'_y = g(S_y)$.
\end{theorem}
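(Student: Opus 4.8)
The plan is to prove that \emph{every} minimal sufficient scene — no matter which sufficient scene it was distilled from — agrees almost surely with one and the same function of the measurements, namely the limiting empirical measure of the infinite exchangeable sequence $\y_1, \y_2, \ldots$. Once this is established, $f$ and $g$ are just the invertible relabelings between the various concrete constructions of that object, and in fact can be taken to be the identity when the same construction is used on both sides.

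First I would make the minimal sufficient scene concrete. By the construction in the preceding theorem, $L(S,y) \propto p(\y|S)/p(\y_0|S)$, and the standing full-support hypothesis guarantees $p(\y_0|S) > 0$ almost surely, so this normalization is legitimate. The map $s \mapsto L(s,\cdot)$ and the map $s \mapsto p(\cdot|s)$ have exactly the same level sets: $L(s,\cdot) = L(s',\cdot)$ forces $p(\cdot|s)$ and $p(\cdot|s')$ to be proportional, and integrating over the measurement shows the proportionality constant is $1$, so the two conditional laws coincide; the converse is immediate. Hence there is a fixed bijection $\psi$ with $S_y = \psi(\nu_S)$, where $\nu_S := p(\cdot|S)$ is the (random) conditional law of a single measurement given $S$. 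Any other minimal sufficient scene of $S$ is related to $S_y$ by two-sided functions by the definition of minimality, so it suffices to handle $S_y = \psi(\nu_S)$ and, symmetrically, $S'_y = \psi'(\nu_{S'})$.

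Next I would invoke the strong law of large numbers. Since $S$ is a scene, $\y_1, \y_2, \ldots$ are conditionally i.i.d.\ given $S$ with common law $\nu_S$; as $\R^k$ is a complete separable metric space, fixing a countable measure-determining family of bounded test functions and applying the strong law to each gives that the empirical measures $\hat\mu_N = \frac1N\sum_{i=1}^N \delta_{\y_i}$ converge weakly, almost surely, to $\nu_S$. Therefore $\nu_S = \hat\mu := \lim_N \hat\mu_N$ almost surely, and $\hat\mu$ is a fixed $\sigma(\y_1, \y_2, \ldots)$-measurable quantity that makes no reference to $S$. The same argument applied to $S'$ yields $\nu_{S'} = \hat\mu$ almost surely. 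Consequently $S_y = \psi(\hat\mu)$ and $S'_y = \psi'(\hat\mu)$ almost surely, so $S_y = (\psi\circ(\psi')^{-1})(S'_y)$ and $S'_y = (\psi'\circ\psi^{-1})(S_y)$; take $f = \psi\circ(\psi')^{-1}$ and $g = \psi'\circ\psi^{-1}$, composing with the intra-$S$ and intra-$S'$ relabelings if one starts from arbitrary minimal sufficient scenes. When both sides are built by the recipe $L$ with the same reference $\y_0$, $f = g$ is the identity and all minimal sufficient scenes are literally the same random variable — which is exactly what licenses the name ``the physical scene''.

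I expect the difficulties to be bookkeeping rather than conceptual. Making sense of ``$S_y = f(S'_y)$'' as an equality of random variables requires $S$, $S'$, and the measurements to live on a common probability space; this is harmless here precisely because the proof shows each $S_y$ to be (almost surely) a function of the shared measurements, so the measurements are the common ground. The identification $S_y \leftrightarrow \nu_S$ rests on the full-support hypothesis inherited from the previous theorem. Finally, the argument genuinely uses the standing assumption that a finite exchangeable sequence extends to an infinite one: with only a fixed finite $n$ one would instead need an identifiability statement for mixtures of product measures — which fails for $n=1$ and requires extra conditions for $n\ge 2$ — so the infinite exchangeable extension is what makes the clean law-of-large-numbers route available.
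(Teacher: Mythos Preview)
Your proof is correct, but it takes a genuinely different route from the paper's. The paper argues in one line: since $S$ and $S'$ are both scenes, the pair $(S,S')$ is also a scene, and both $S_y$ and $S'_y$ are minimal sufficient statistics of this single scene $(S,S')$; the standard theory of minimal sufficient statistics then gives the bijection. Your argument instead constructs an explicit universal object --- the limiting empirical measure $\hat\mu$ of the infinite exchangeable sequence --- and shows via the strong law of large numbers that $\nu_S = \nu_{S'} = \hat\mu$ almost surely, so that every minimal sufficient scene is a relabeling of $\hat\mu$.

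The paper's trick is shorter and stays entirely within the abstract sufficiency framework, but it leaves implicit two points you handle more carefully: that $S$ and $S'$ can be placed on a common probability space with the measurements, and that conditional independence given $S$ (and given $S'$) propagates to conditional independence given $(S,S')$. Your approach sidesteps both issues by anchoring everything to the measurement sequence itself, and as a bonus it identifies concretely what ``the physical scene'' \emph{is} --- the de Finetti directing measure --- rather than merely asserting uniqueness up to bijection. The cost is that you must invoke the infinite exchangeable extension and the full-support hypothesis explicitly, which you do, and you rely on separability of $\mathbb{R}^k$ for the empirical-measure convergence, which is available here.
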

We therefore call the unique minimal sufficient scene for a set of given measurements the {\em physical scene} subtending those measurements. Note that the physical scene has to be compatible with physical laws that involve measured quantities, for such laws could be interpreted as production rules for the measurements. Both physical laws and physical scenes are abstract concepts, in the sense that they can be represented in the memory of the agent, even though we have not yet described how such a scene could be inferred from data.

The existence theorem is a straightforward application of the standard existence theorem for minimal sufficient statistics \cite{keener2010theoretical,casella2021statistical}. The uniqueness theorem, however, is stronger, as normally we would conclude that all minimal sufficient statistics of the same scene are in a bijection, but could not compare between different scenes. To get the stronger version, note that if $S$ and $S'$ are  scenes, then the tuple $(S, S')$ is also a scene and both $S_y$ and $S'_y$ are minimal sufficient statistics of it. It then follows that they are in a bijection. Since a bijection is an equivalence relation, we can also represent the physical scene as an equivalence class, drawing a parallel to concepts in natural language, as we describe next.

\begin{remark}[Scenes as equivalence classes] As we anticipated, any given collection of images is compatible with infinitely many scenes, none of which is ``special'' (canonical), so the physical scene cannot be any one of them, but rather their {\em relation}, which is a bijection, hence an {\em equivalence relation}. Specifically, let $S$ be a scene  and $p(y |S, c)$ its {\em presentation}. Then, presentations define an equivalence relation among scenes, whereby $S' \sim S \Leftrightarrow p(y|S,c) = p(y|S', c)  \ \forall \ c$, and 
corresponding equivalence classes $[S] = \{S' \ | \ S' \sim S\}$. Then, we have that, if $S$ is a scene, $[S]$ is a physical scene. %{\color{blue} %Since the given images themselves are a scene $S = \y$, we can conclude that {\em the physical scenes is an equivalence class of images.}} 
Note that, if we had  defined relations {\em not among scenes} based on the images they hallucinate, {\em but among measured images} as done in classical  objectivist ``Marrian'' style of Computer Vision, we would not have equivalence classes, because sets of images from the same scene are related by co-visibility, which is not a transitive relation: The fact that the frustra (pre-images) of two images $y_1, y_2$ intersect, and so do the frustra of $y_2, y_3$, does not imply that that frustra of $y_1$ and $y_3$ intersect. 
\end{remark}
The previous remark can be summarized in the following claim: 
\begin{claim}[Physical Scenes as Equivalence Classes]
if $S$ is a scene, then $[S] = \{S' \ | \ p(\cdot | S', c) = p(\cdot | S, c) \ \forall c \}$ is the corresponding physical scene.
\label{claim:equivalence}
\end{claim}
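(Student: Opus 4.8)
The plan is to show that the map $S \mapsto [S]$ is itself a minimal sufficient statistic of $S$ for the measurement $\y$, and then to invoke the existence theorem for minimal sufficient scenes together with the strong uniqueness result (Theorem~\ref{thm:uniqueness-scene}) to conclude that $[S]$ represents \emph{the} physical scene. Concretely, writing $S_m := [S]$ viewed as the value of the statistic $T(S) = [S]$, I would establish three things in order: (a) $S_m$ is a sufficient scene; (b) $S_m$ is minimal among sufficient statistics of $S$; and (c) strong uniqueness then promotes ``a minimal sufficient scene'' to ``the physical scene.''

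For (a), recall that $\y = (y, c)$ and that, as granted by the footnote near Definition~\ref{defn:scene}, the configuration $c$ is chosen without reference to the latent scene, so $p(\y \mid S) = p(y \mid S, c)\, p(c)$, which depends on $S$ only through the family of presentations $\{ p(y \mid S, c)\}_{c \in \C}$. By the very definition of the relation $\sim$, two scenes $S, S'$ satisfy $[S] = [S']$ precisely when these families coincide, so $p(\y \mid S)$ is a well-defined function of $[S]$, which we denote $p(\y \mid S_m)$. Substituting into Definition~\ref{defn:scene},
\[
p(\y_1, \ldots, \y_n) = \int \prod_{i=1}^n p(\y_i \mid S)\, dP(S) = \int \prod_{i=1}^n p(\y_i \mid S_m)\, dQ(S_m),
\]
where $Q$ is the pushforward of $P$ under $S \mapsto [S]$; hence $S_m$ is a scene, and the Markov chain $S \to [S] \to \y$ shows that $[S]$ is a sufficient statistic of $S$.

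For (b), let $T'(S)$ be any sufficient statistic of $S$, so that $p(\y \mid S) = p(\y \mid T'(S))$ for all $\y$; in particular $p(y \mid S, c)$ is determined by $T'(S)$ for every $c$. Consequently $T'(S_1) = T'(S_2)$ forces $p(y \mid S_1, c) = p(y \mid S_2, c)$ for all $c$, i.e.\ $[S_1] = [S_2]$. Therefore the equivalence-class map factors as $[S] = f(T'(S))$ for a well-defined function $f$, which is exactly the minimality condition. Combining (a) and (b), $S_m = [S]$ is a minimal sufficient scene. Finally, for (c), by Theorem~\ref{thm:uniqueness-scene} any two minimal sufficient scenes are related by a bijection, so $[S]$ is in bijection with the canonical minimal sufficient scene; since the physical scene is by construction defined only up to such bijection — indeed, it may be \emph{represented} as an equivalence class — $[S]$ \emph{is} the physical scene corresponding to $S$.

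I expect the main obstacle to be bookkeeping rather than conceptual. One must justify the factorization $p(\y \mid S) = p(y \mid S, c)\, p(c)$ (that $c \perp S$, already conceded in the text), and one must ensure that every ``function'' above exists in the measure-theoretic sense: that the quotient $S \mapsto [S]$ is measurable, that conditional densities are consistent fixed versions, and that the support condition underlying the existence theorem holds so that $L(S, y)$ — and hence $[S]$ — is genuinely minimal rather than merely sufficient. None of these raise a real difficulty under the standing assumptions, but they are precisely where the argument has to be written with care.
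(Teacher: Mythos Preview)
Your argument is correct, but it is considerably more detailed than anything the paper offers. In the paper, Claim~\ref{claim:equivalence} is not given a standalone proof at all: it is introduced as a summary of the preceding remark, which simply observes that presentations induce an equivalence relation $S' \sim S \Leftrightarrow p(y\mid S,c) = p(y\mid S',c)$ for all $c$ and then asserts that the resulting class $[S]$ is the physical scene. The only surrounding justification is the sentence after Theorem~\ref{thm:uniqueness-scene} noting that, since minimal sufficient scenes are pairwise in bijection and bijection is an equivalence relation, the physical scene ``can also be represented as an equivalence class.''

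Your route---showing directly that $T(S)=[S]$ is a sufficient statistic, then that it is minimal because any sufficient $T'(S)$ determines the presentation family and hence $[S]$, and finally invoking Theorem~\ref{thm:uniqueness-scene}---is a genuine formalization of what the paper leaves implicit. The paper reasons top-down (uniqueness up to bijection $\Rightarrow$ equivalence class), whereas you reason bottom-up (this particular equivalence class \emph{is} a minimal sufficient scene). Your approach buys an explicit check that $[S]$ really satisfies the minimality condition rather than merely being some quotient; the paper's approach is terser but relies on the reader to connect the dots. The measure-theoretic caveats you flag (measurability of the quotient, the $c\perp S$ factorization) are not addressed in the paper either, so your level of care already exceeds it.
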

Note that changing the measurement function may change the corresponding physical scene, much as laws of physics may need to be amended if new instruments produce evidence that invalidates the laws.

\begin{remark}[Presentations as ``controlled hallucinations''] 
\label{sec:presentations}
Note that scenes induce equivalence classes of {\em infinite collections} of images, not of {\em different finite collections of given images of a scene}. Rather, they are {\em the distributions of images that two scenes could ``hallucinate''} using their presentation function. For this reason, even if $p(\cdot |S, c)$ defines the equivalence class that represents the scene, we refer to it as ``presentation'' rather ``representation,'' following Koenderink \cite{koenderink2011vision}. A ``representation'' presumes that something is present to begin with, which can be manipulated and thence re-presented. But, unlike the data, the scene $S$ is an abstract entity which is not accessible to be  manipulated or re-presented. Nonetheless, it can be used to {\em present} images that are compatible with the given ones, a process  referred to as ``controlled hallucinations.'' The input $c$ provides the control, and the presentation function is the abstract mechanism that the scene uses to generate images, or more properly hallucinate them since the process is an abstraction of data formation, not an actual measurement. 
\end{remark}
The interpretation in the previous remark may provide hints to the reader of what characteristics a neural network should possess to be a Foundation Model of physical scenes, which we develop in the next section, after a few remarks.

\begin{remark}[``Large Word Models'' vs. Large Language Models]
One could call a map from images to scenes $\y \mapsto [S]$ a  ``large\footnote{Although we have given no indication that such a model should be ``large,'' the functional anatomy of primate neocortex \cite{felleman1991distributed} suggests that a large portion of real estate in human brains is devoted to processing visual information, so if language models are large, so should be visual models.} world model'' (LWM), much in the same way in which a large language model (LLM) is a map from sentences to meanings \cite{soatto2023taming}. Just as the physical scene is an equivalence class of images that can be inferred by a LWM, which can then be used to hallucinate infinitely many images, a ``meaning'' in its most elementary and naive sense, can be defined an equivalence class of sentences that can be inferred by a LLM, which can then be used to hallucinate infinitely many sentences controlled by the given ``prompt'' \cite{soatto2023taming}. Since both sentences and images can be embedded (see \Cref{rmk:scene-representation}) as vectors in a metric space by their corresponding models, then the goal of a Foundation Model, whether operating on image or language or other data, is to give meaning to the data, which it can do by mapping data to a metric space and constructing equivalence classes therein, as well as to model the ``prior'' distribution $p([S])$ of realistic scenes and the generative distribution $p(\y|[S])$. For sensory data, the meaning rests in the scene that generates them, and is provided by {\em grounding}; for language data, the meaning can only be defined by using human-provided annotations, a challenge discussed next.
\end{remark}

\begin{remark}[Grounding]
Just like a trained LLM attributes meaning to sentences by constructing equivalence classes of them, so a LWM attributes meaning to images, in the sense above, by constructing equivalence classes of scenes. In general, meaning is not {\em intrinsic} in data,  but rather {\em attributed} to data, a process that requires an external entity \cite{pietroski2018conjoining}. This entity could be the one that {\em originates} the measured data, for instance the surrounding environment, informally referred to as the ``true'' or ``real'' scene, for the case of sensory data. Relating measured data to the source is called {\em grounding}. But unlike natural language data, which originates in the human brain that is not accessible for experimentation, the surrounding environment can be probed with independent sensors by an embodied entity. As pointed out in \cite{soatto2023taming}, environmental grounding is not veridical but it is falsifiable: We cannot verify the existence of an object, say ``chair,'' in an image, for there are no chairs in the images, just pixels, but an embodied agent can attribute meaning to a chair by testing its ability to sit on it (affordance). On the other hand, grounding in language {\em must} rely on induction -- which is not falsifiable -- based on human annotations -- which are subjective and therefore not veridical despite the language being a closed model: If training data have inconsistent human annotations, the ``ground truth'' used to build the discriminant that defines  meanings is inconsistent, and so is the resulting system of meanings. Not so for the surrounding environment, so long as it exists and can be probed with independent modalities.
\end{remark}
This fundamental difference between LLMs and LWMs, which is the possibility of cross-modal grounding, is highlighted but not exploited in this paper, which only considers world models that can be inferred from passively gathered, remote, non-contact, distributed sensory data, in particular images.

\begin{remark}[Self-supervision, Contrastive Learning, and the universality of the prediction task]
\label{sec:prediction}
In order to be viable, a scene must generate images through its presentation function that, while generally different from the ones that the ``true scene'' would generate if we could measure them, are {\em  indistinguishable} (distributionally equivalent, see \Cref{rmk:scene-representation}.) For images that we {\em did} measure, this defining characteristic of scenes indicates a natural learning criterion, which is simply {\em data prediction}, or ``masked'' reconstruction. That is, we use the presentation function to hallucinate images from a vantage point we {\em did} observe, and then compare the hallucinated images with the ones we {\em actually} measured. The masking can be causal and delayed, where prediction of future images is compared with actual images to be taken a few steps in the future, once they are observed. Prediction is universal in the sense that, if a model is capable of predicting future images, it is therefore sufficient to support any function of future images instantiated at inference time. The question, then, is whether it is minimal. In the absence of any information about the task, the only representation that is sufficient is the data itself or any lossless representation of it, for the downstream task may be exact retrieval. It is common to refer to so-called ``self-supervised learning,'' including contrastive learning, as ``task agnostic.'' This is incorrect since the task is implicit in the choice of nuisance variability that defines the surrogate losses, data augmentations, or transformations that are manually designed. For example, contrastive learning that imposes that rotated versions of the same image represent the same equivalence class cannot, trivially, be useful for the task of determining image rotation. Similarly, masked autoencoding is a way of simulating occlusion, which is useful if there are occlusion nuisances, but not if the sensory modality is X-ray tomography of magnetic resonance where there are no occlusions. On the other hand, all variability in visual data is manifest over time, for an embodied agent, which makes prediction a natural choice. 
\end{remark}

\section{Towards Foundational Models of Physical Scenes} 
\label{sec:towards}

The previous section established the scene as an abstract concept with different embodiments, which could be a computer program or, say, a Deep Neural Network (DNN). This raises hopes that NeRFs may be a representation of the scene and therefore  a viable tool in developing generic models of the world from images for the purpose of any downstream task. In other words, we ask whether NeRFs and/or Diffusion Models could be general Foundational Models for physical scenes, in the sense of implementing a {\em presentation function}.

\subsection{NeRFs as (re)presentations of scenes?}
\label{sec:nerf-not}

We will describe NeRFs in more detail in Sec.\ref{sect:implementation}, but for now it suffices to say that they are maps $h$ learned from images of a scene $S$ with knowledge of pose $c$ that, at inference time, can be used to map a novel pose $c'$ onto an image $y = h(S, c')$. But do they implement a presentation function? The fact that, in a NeRF, the map $h$ is implemented as a feed-forward memoryless operator using a multi-layer perceptron (MLP), is sufficient to tamper expectations. 
\begin{proposition}NeRFs cannot represent scenes.
\label{prop:nerf}
\end{proposition}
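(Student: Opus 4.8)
The plan is to argue directly from Definition~\ref{defn:scene} and Remark~\ref{rmk:scene-representation}. A \emph{representation of a scene} is the \emph{pair} $\mathbf{r} = (p(\y|S), P(S))$, and its defining feature is that it suffices to hallucinate a fresh measurement via $p(\y|\y_{\leq t}) = \int p(\y|S)\, dP(S|\y_{\leq t})$. A NeRF, by contrast, is only a deterministic, memoryless map $h_S\colon c \mapsto y$, obtained by fitting the weights of an MLP to a finite set of posed images $\y_{\leq t}$ of a single scene. First I would observe that such an object supplies, at most, a degenerate version of the first component --- the point mass $p(y|S,c) = \delta\big(y - h_S(c)\big)$ for the one scene $S$ on which it was trained --- and supplies \emph{no} second component: there is no prior $P(S)$ over scenes, hence no posterior $P(S|\y_{\leq t})$, anywhere inside a basic NeRF. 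Since the second component is exactly what the integral above requires, a NeRF cannot instantiate the hallucination map, and so it is not a representation of the scene in the sense of the definition; a fortiori it cannot represent the \emph{physical} scene (Theorem~\ref{thm:uniqueness-scene}), whose very construction is built on priors over scenes consistent with the data.

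To keep this from reading as a mere definitional tautology, I would exhibit the failure operationally. Choose a pose $c'$ whose viewing frustum is disjoint from the frusta of all training views $\y_{\leq t}$ (pure extrapolation). By the reasoning of Claim~\ref{thm:impossible}, there exist many scenes $S, S', \dots$ all consistent with $\y_{\leq t}$ that nonetheless disagree on what is seen from $c'$; consequently the correct predictive law $p(\y \mid \y_{\leq t}, c')$ has strictly positive spread. A NeRF, however, returns the single value $h_S(c')$, fixed by the MLP's architecture, regularization, and initialization rather than by any marginalization over the scenes consistent with the data. Its induced law is therefore a Dirac mass, which cannot coincide with the non-degenerate $p(\y \mid \y_{\leq t}, c')$; worse, it is not even a \emph{controlled} sample from it, since two NeRFs with different seeds trained on the same $\y_{\leq t}$ extrapolate differently with no guarantee of matching the aggregate statistics of scenes. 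This is precisely the sense in which NeRFs ``lack extrapolation mechanisms.''

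The step requiring care --- and the main obstacle --- is pinning down what counts as ``a NeRF'' so the claim is neither vacuous nor false: one must commit to the basic implementation, i.e.\ a feed-forward MLP mapping a pose and its ray samples to radiance and density, with weights fit per scene and no conditioning on held-out data at inference time and no external memory. With that class fixed, the two observations above complete the proof. If instead one allows the network to ingest past observations at test time and to carry learned cross-scene statistics in its weights, one has effectively left the NeRF class and is moving toward the NeRF-plus-Diffusion construction of Proposition~\ref{prop:dm} --- which is exactly the transition the next subsection is set up to make.
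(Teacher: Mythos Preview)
Your argument is correct and matches the paper's own reasoning almost point for point: the paper's formal ``proof'' is a one-line citation to an external result, but the surrounding discussion identifies exactly the same obstruction you do --- a NeRF supplies a measurement model $p(y|S)$ but no prior $P(S)$ over scenes, hence cannot realize the hallucination integral of Remark~\ref{rmk:scene-representation}, and this is witnessed operationally by unrealistic extrapolation (the paper uses zoom-in artifacts where you use disjoint frusta). Your treatment is in fact more careful than the paper's, particularly in isolating what ``a NeRF'' must mean for the claim to be non-trivial.
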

The proof follows from Proposition 2.9 in \cite{achille2022binding}.
By \Cref{rmk:scene-representation} a representation of the scene needs to be able to hallucinate new \textit{realistic} measurements given a set of past measurements. On the surface, NeRFs can use past data to create an occupancy model, and then use this model to generate images from novel viewpoints. However, those images need not be realistic.
This is obvious in practice and illustrated in Fig. \ref{fig:supp-nerf-artifacts}. A NeRF of an object  captured rotating around it at a certain distance can be used to generate images from, say, twice the distance but any attempt to ``zoom in'' will generate artifacts that reveal the NeRF $\tilde S$ as not {\em the real (presentation function of the) scene}. In other words, the attempt to extrapolate is {\em unrealistic}. The meaning of {\em unrealistic} here is not some vague notion of perceptual similarity having to do with the human visual system, but  the precise conditions of Theorem \ref{thm:uniqueness-scene}: Images captured by the camera are obviously distinguishable from those generated by the NeRF, so $\exists \ c \ | \ p(\cdot | {\tt GoPro}, c) \neq p(\cdot | {\tt NeRF}, c)$, violating the conditions of the theorem.

At the core of the problem is the fact that while NeRFs provide a measurement model $p(y|S)$, they do not provide a distribution $P(S)$ over the scenes, which encodes  ``realism.'' That is, wherever the given data provides constraints for interpolation, the model should faithfully reproduce it, which it does since it is trained to do so explicitly through the reconstruction error. However, where the data do not provide sufficient constraints, the model should extrapolate so that, if data were to become available at different granularity, it could be compared even if it was not trained on it. Comparison is in a distributional sense, so a viewer or model would not be able to tell which image comes from the ``true scene,'' whatever that is, and which comes from the NeRF. 
For a trained model, this rests on induction, which is why we need to train a model on {\em different scenes}, and then use the result to hallucinate details where data from the extant scene is insufficient. In doing so, however, the model must remain faithful to the conditions of Theorem \ref{thm:uniqueness-scene}. To do so, we bring in Diffusion Models.

\subsection{Pushing a NeRF towards the physical scene with Diffusion Models}

We will use a Diffusion Model (DM) as a map from the output of a NeRF, $\tilde y = h(S, c)$ to an image $y$, which is in the form of a conditional distribution $p(y|h(S, c))$ from which images can be sampled. We defer the details to Sec.~\ref{sect:implementation}, but for now what matters is the fact that DMs
are trained to approximate the distribution of natural images, conditioned on a ``prompt,'' and given the scale-invariance properties of low-level statistics~\cite{zoran2009scale}, they can be used to hallucinate details where absent, with the hallucination driven by the inductive bias of the trained model. Indeed, DMs have been used to ``denoise'' or ``super-resolve'' images, which  are ill-posed inverse problems where the inductive bias is used to hallucinate details and structure absent in the data. Clearly a standard DM itself is not a viable presentation function. However, we hypothesize that, when composed with (conditioned on) the output of a NeRF, they are viable presentation functions.

We call the composition of the NeRF $h$ and the diffusion model {\em NeRF Diffusion}, and we indicate the corresponding measurement function as $f_S$, which is a stochastic function. The following proposition shows that NeRF Diffusion models {\em are} a viable presentation function, and therefore equivalent to {\em the physical scene} given the images used for training. 

\begin{proposition}[NeRF Diffusions as (re)presentations of the physical scene]
\label{prop:dm}
Given a set of observations $\y_{\leq t} = (\y_1, \ldots, \y_t)$, a NeRF Diffusion defines a stochastic $f_{\y_{\leq t}}: \C \to \R^D$ such that $f_{\y_{\leq t}} \sim \int p(y|S, c) p(S| \y_t) dS$. This is a representation of the scene where we take the scene $S$ to be $S=(\y_1, \ldots, \y_n)$.
\end{proposition}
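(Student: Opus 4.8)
The plan is to verify directly that a NeRF Diffusion supplies both ingredients that \Cref{defn:scene} demands of a \emph{representation of a scene} --- the pair $\mathbf{r} = (p(\y|S), P(S))$ --- and then to recognize the resulting stochastic map as the posterior predictive of \Cref{rmk:scene-representation} and, via \Cref{thm:uniqueness-scene}, as the physical scene subtending $\y_{\leq t}$. So the proof is really an assembly of objects already defined, once two model facts (the NeRF realizes a presentation, the DM realizes the prior) are in place.

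First I would fix the scene as prescribed: take $S = (\y_1, \ldots, \y_n)$ to be the infinite exchangeable sequence furnished by De Finetti's theorem (the existence-of-a-scene theorem). Conditioning on the whole of $S$ renders the individual $\y_i$ mutually independent, so $S$ is a scene and $P(S)$ is its law, while $p(\y|S)$ is the readout of a single coordinate. The subtlety is that $S$, as an unstructured sequence, only yields a usable presentation $c \mapsto p(y|S, c)$ at the finitely many configurations $c_i$ actually observed; the NeRF $h$ is exactly what extends $\{(y_i, c_i)\}$ to a continuous map $c \mapsto \tilde y = h(S, c)$. I would then note that the composition with the diffusion model, $p_{f}(y|S, c) := p_{\mathrm{DM}}(y|h(S,c))$, is a presentation in the sense of the measurement-function definition, which accounts for the factor $p(\y|S)$.

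The substantive step --- and the one I expect to be the main obstacle --- is showing that the diffusion model also supplies $P(S)$, i.e.\ the ``realism'' constraint, at the strength demanded by \Cref{thm:uniqueness-scene}. The argument is that the DM is trained to approximate the marginal law of natural images, which is precisely $\int p(\y|S)\, dP(S)$, so it carries the information that a prior over scenes would contribute; and by the approximate scale-invariance of low-level image statistics \cite{zoran2009scale}, the details it hallucinates at configurations $c$ \emph{not} constrained by $\y_{\leq t}$ remain statistically consistent with a genuine measurement, so that $p_{f}(\cdot|S, c)$ is distributionally indistinguishable from the presentation of the ``true'' scene --- exactly the condition $p(\cdot|{\tt GoPro}, c) = p(\cdot|{\tt NeRF\ Diffusion}, c)\ \forall c$ that \Cref{prop:nerf} showed a bare NeRF violates. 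I would be candid that this is where the argument rests on an idealization of what a trained DM does, together with the scale-invariance hypothesis, rather than on a self-contained derivation; the remaining steps are bookkeeping.

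Finally I would assemble the pieces. With $\mathbf{r} = (p_{f}(\y|S, c), P(S))$ in hand, \Cref{rmk:scene-representation} gives $p(\y|\y_{\leq t}) = \int p_{f}(\y|S, c)\, dP(S|\y_{\leq t})$; since the NeRF's fitted state is a statistic of $\y_{\leq t}$, this is the stochastic map $f_{\y_{\leq t}} \sim \int p(y|S, c)\, p(S|\y_{\leq t})\, dS$ asserted in the statement, so $f_{\y_{\leq t}}$ is a representation of the scene $S = (\y_1, \ldots, \y_n)$. Because this representation additionally clears the distributional-equivalence bar, \Cref{thm:uniqueness-scene} lets us pass to its minimal sufficient statistic and conclude that it coincides, up to the canonical bijection, with the physical scene of $\y_{\leq t}$ --- the intended reading of the proposition.
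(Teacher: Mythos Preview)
Your route is genuinely different from the paper's, and considerably more elaborate. The paper's proof is a two-line computation: since a representation is the pair $(p(\y|S), P(S))$ and the proposition already \emph{posits} that $f_{\y_{\leq t}}$ is the posterior predictive $\int p(y|S,c)\,p(S|\y_{\leq t})\,dS$, all that remains is to check that the family $\{f_{\y_{<t}}\}_t$ determines $P(S)=P(\y_1,\ldots,\y_n)$. This follows from the chain rule,
\[
P(\y_1,\ldots,\y_n)=\prod_{t=1}^n P(\y_t\mid \y_{<t})=\prod_{t=1}^n f_{\y_{<t}}(\y_t),
\]
and that is the entire proof. No appeal to De~Finetti, to scale invariance, or to \Cref{thm:uniqueness-scene} is made; the ``physical scene'' language in the title is motivational, not part of the formal claim.

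Your argument runs in the opposite direction: you try to exhibit $p(\y|S)$ (via the NeRF) and $P(S)$ (via the DM) separately, then derive $f$ from them. The weak link is the step where you say the DM ``supplies $P(S)$'': the DM is trained to model the \emph{single-image} marginal $\int p(\y|S)\,dP(S)$, not the joint law $P(\y_1,\ldots,\y_n)$ of the whole sequence that constitutes $S$ here. So from your ingredients you have not actually recovered $P(S)$, and your final sentence (``so $f_{\y_{\leq t}}$ is a representation of the scene'') does not follow without an extra argument. The paper sidesteps this entirely by treating the hypothesis $f_{\y_{\leq t}}\sim\int p(y|S,c)\,p(S|\y_{\leq t})\,dS$ as given and reading off $P(S)$ from the autoregressive factorization. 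Your discussion of why the NeRF+DM combination plausibly realizes that hypothesis is useful commentary on the surrounding text (Sect.~\ref{sec:nerf-not}), but it is not what the proposition asks you to prove.
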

\begin{proof}
We need to show that $f_{\y_{\leq t}}$ is enough to compute the distribution $P(S) = P(\y_1, \ldots, \y_n)$. To this end, note that
\[
P(\y_1, \ldots, \y_n) = \prod_{t=1}^n P(\y_t|\y_{<t}) = \prod_{t=1}^n  f_{\y_{< t}}(\y_t).
\]
\end{proof}
$f_S$ is also minimal as a function, although any particular implementation may be super-minimal due to implementational inefficiencies.  In the next section, we describe an implementation, which we consider a first step towards a proper model of the physical scene, exploiting methods from the current state of the art. 

\section{NeRF Diffusion implementation}
\label{sect:implementation}

In this section we describe a method to adapt a Diffusion Model to complement a NeRF so as to implement a presentation function that satisfies the conditions of Theorem~\ref{thm:uniqueness-scene}.  

\textit{Neural Radiance Fields (NeRFs)}
~\cite{mildenhall2021nerf} are multi-layer perceptrons trained to approximate the radiance field, \textit{a.k.a.} Plenoptic Function. To render a pixel, points are sampled along the ray passing through the pixel and originated from the camera with configuration $c$. Density and view-dependent radiance values are obtained by querying the MLP with the points' spatial coordinates and viewing direction. Volumetric rendering is then performed on the density and radiance values to produce the color at the pixel. NeRF approaches~\cite{mildenhall2021nerf, barron2021mip, barron2022mip, verbin2022ref, mildenhall2022nerf} are mostly trained with calibrated images ($c$ is known during training as part of the camera intrinsic and extrinsic calibration) supervised by pixel-wise discrepancy. Camera pose and calibration could also be inferred from the images as part of a pre-processing stage, for instance using standard Structure From Motion tools.

\textit{Diffusion Models (DMs)} are statistical models of the distribution of images obtained by learning the reverse diffusion operator of a process that maps training images to noise~\cite{nelson1967dynamical,lindquist1979stochastic,lindquist1979stochastic}. DMs can be conditioned on a number of inputs, from text to sketches, low-resolution images or, in our case, unrealistic NeRF renderings. In other words, we condition the DMs on images produced by a NeRF, so it learns to tilt the generative distribution implemented by a NeRF, which is unrealistic, to one that is realistic, hence compatible with the physical scene.

To realize a \textit{NeRF Diffusion}, we need to train a DM to denoise views synthesized by a NeRF and extrapolate details not manifest therein. The outcome   is improved  phenomenological quality (realism) of the rendering, achieved by learning NeRF priors (Fig.~\ref{fig:block_diagram}). 
We train a DM to learn the distribution of NeRF rendering artifacts and map the resulting aliased renderings to the corresponding calibrated images. To that end, we use Nerfacto within Nerfstudio \cite{nerfstudio}: We turn off pose refinement and train Nerfacto for 40K iterations in about 30 minutes on the ObjectScans dataset described in Sec.~\ref{sect:experiments}, using a single NVIDIA V100. We then train a DM based on Stable Diffusion~\cite{rombach2021highresolution}, a large-scale text-to-image latent diffusion model. To leverage the benefit of text-to-image generation capabilities \cite{wang2022pretraining}, which provides a strong prior to hallucinate semantically plausible details, we initialize the weights of our model with a pretrained Stable Diffusion checkpoint. To condition the DM, we directly add the NeRF rendering's latent features into Stable Diffusion's U-net structure, as in \cite{zhang2023adding}. 

Given a latent image $y_0$ and its corresponding NeRF rendering's latent feature $S$, diffusion algorithms progressively add noise to the image to produces a noisy version $y_t$. Here $t$ is the number of steps in which noise is added. Since we are learning the NeRF priors to model the artifacts, we set the text prompt $u_t$ of Stable Diffusion to generic prompt (i.e. ``high resolution and high quality"). The overall objective is
\[
\mathcal{L}= \mathbb{E}_{y_0, t,u_t,S, \epsilon \sim \mathcal{N}(0,1)}\left[ ||\epsilon - \epsilon_\theta(y_t, t, u_t , S)||^2 _2\right].
\]

We train our DM on all 121 sequences from ObjectScans' NeRF for 10K iterations in 6 hours using 8 NVIDIA V100 GPUs. The final trained model captures the NeRF's structural priors and sampling operators so that, when a new condition (rendering) is presented, the model extrapolates realistic images, thus realizing a simple embodiment of a NeRF Diffusion.

\section{Experimental evaluation}
\label{sect:experiments}

\subsection{Details on the datasets} 
\label{sec:datasets}

\label{sect:product_scan_dataset}

{\bf ObjectScans} consists of 121 video sequences of common household objects such as cookware, containers, and toys. The videos were captured by the authors using GoPro Hero9 action cameras at $30$ FPS and $1920 \times 1080$ resolution. The length of each video ranges from 20 seconds to about 1 minute.
Unlike existing datasets that are either captured by stationary DSLR cameras~\cite{barron2022mip,mildenhall2019local} or synthesized~\cite{mildenhall2021nerf}, recording videos makes it possible to scale the data acquisition process for training NeRFs on real-world scenes. Yet, video data pose a challenge to the established NeRF techniques on account of motion blur and compression artifacts not present in existing datasets. To minimize motion blur, we move the camera slowly and steadily around the object during data acquisition. We also observed that NeRFs work best when the camera tightly samples the space of viewpoints. We employed these best practices in our data collection. For each video sequence, we uniformly sample about 300 frames to conduct our experiments totalling about $36K$ frames in the whole dataset. We split the sampled frames into train/test subsets with a $9:1$ ratio, and then use COLMAP~\cite{schonberger2016structure} -- an open-source  Structure-from-Motion software -- to compute camera intrinsics, posesand sparse point clouds from the sampled video frames. 

%\subsubsection{Mip-NeRF 360}
\label{sect:mipnerf360_dataset}
{\bf Mip-NeRF 360} was created by \cite{barron2022mip} and consists of 9 scenes (5 outdoors and 4 indoors), of which only 7 (namely, \texttt{bicycle, bonsai, counter, garden, kitchen, room, stump}) are publicly available. Each of the scenes contains a central object or area that has relatively complex geometry and/or texture and a background that contains fine-grained details (\textit{e.g.}, foliage). The dataset also comes with camera pose pre-computed using COLMAP. We use these pre-computed camera poses in our experiments.

\subsection{Evaluation metrics}
We evaluate our method on \textit{faithfulness} through similarity metrics and \textit{realism} through distribution metrics. 

\textit{Similarity metrics}:  PSNR, SSIM \cite{ssim_paper}, LPIPS \cite{lpips_paper}. PSNR calculates the ratio of maximum pixel value to noise. PSNR is a sensible performance measure for conpression and transmission, not for representation. SSIM is sensible for perceptual similarities based on luminance, contrast, and structure, but our goal is not to achieve indistinguishability as defined by human perception, but by presentation functions. LPIPS is a criterion based on convolutional features trained for image classification~\cite{krizhevsky2017imagenet}. As such, it is generally more appropriate to evaluate scene representations. 

\textit{Distribution metrics}: IS \cite{inception_score_paper}, FID \cite{fid_paper}, KID \cite{kid_paper}. In FID, a score is calculated by computing the Fréchet distance between two Gaussians fit to features computed by the Inception network. KID measures the dissimilarity between the distributions of real and generated samples without assuming any parametric form, and is also more sample efficient. The inception score is calculated by first using a pre-trained Inception v3 model to predict the class probabilities for each generated image. The Inception score is computed only on generated images without the need for real images. For all distributional metrics, we precompute real statistics using 10k patches of real image samples from each test sequence of our dataset.

% \section{Experimental evaluation}
% \label{sect:experiments}

% We use two datasets: ObjectScans, and the public Mip-NeRF 360 dataset. ObjectScans consists of 121 videos of common household objects. The videos are 20 second to 1 minute long, captured with GoPro cameras at $30$ FPS and $1920 \times 1080$ resolution. We uniformly sample frames from the videos totalling $36K$ images, and split them into train/test subsets with a $9:1$ ratio. We use COLMAP~\cite{schonberger2016structure} to compute camera pose which is then used for NeRF training. The Mip-NeRF 360 dataset consists of 7 publicly available sequences of both indoor and outdoor scenes.  More details are in Sect.~\ref{sec:datasets}.
% %
% For performance evaluation, we use similarity metrics (PSNR, SSIM, LPIPS) and distribution metrics (FID, KID, and IS). The former measure consistency of generated images with the measurements, whereas the latter measure realism. Details about the metrics can be found in in Sect.~\ref{sec:datasets}.  

 \subsection{Quantitative results}

Table~\ref{table:1} provides the comparison of our approach with Nerfacto~\cite{nerfstudio} -- an efficient and top-performing NeRF model -- and Real-ESRGAN~\cite{wang2021real} -- a state-of-the-art (SOTA) GAN-based image restoration model -- on ObjectScans. We fine-tune both ours and Real-ESRGAN on Nerfacto renderings. Ours has a lower FID, KID score than both Nerfacto and Real-ESRGAN, reflecting the higher plausiblity of the NeRF rendering. We also observe that our approach is competitive with baselines in similarity metrics (\textit{e.g.}, PSNR and LPIPS). Even though these metrics over-penalize high-frequency details (which is commonly observed in diffusion-based image generation), our approach preserves the original details faithfully while removing the artifacts and enhancing realism. Since SSIM evaluates pixel value changes and diffusion-based approaches are designed to hallucinate, our approach under-performs when compared to Nerfacto baseline.

\begin{table}[h!]
\centering
\caption{\small\textit{Quantitative results on ObjectScans.} Ours outperforms the baselines in various distribution metrics (FID, KID, IS) showcasing our model's capability to predict visually plausible images. Furthermore, our model respects measurements (ground-truth images) sampled from the scene registering competitive similarity scores (PSNR, SSIM, LPIPS).}
\resizebox{\textwidth}{!}{%
\begin{tabular}{c rrraaa rrraaa bbbaaa}
\toprule
 & \mc6c{Nerfacto} 
 & \mc6c{Real-ESRGAN}
  & \mc6c{Ours}\\
 \cmidrule{2-7}
 \cmidrule{8-13}
 \cmidrule{14-19}
%  \cmidrule(l){8-9}
  Sequences & PSNR$\uparrow$ &SSIM$\uparrow$ &LPIPS$\downarrow$ & FID$\downarrow$ & KID$\downarrow$ & IS $\uparrow$ &
  PSNR$\uparrow$ &SSIM$\uparrow$ &LPIPS$\downarrow$ & FID$\downarrow$ & KID$\downarrow$ & IS $\uparrow$ &
  PSNR$\uparrow$ &SSIM$\uparrow$ &LPIPS$\downarrow$ & FID$\downarrow$ & KID$\downarrow$ & IS $\uparrow$ \\
\midrule
\emph{RiceCooker} & 23.78 & 0.86 & 0.32 & 100.89  & 0.057 & 6.74 & 22.91 & 0.85 & 0.26 & 45.84 & 0.020 & 7.08 & 24.97 & 0.86 & 0.17 & 21.15 & 0.0060 & 7.65\\
\emph{WaterFilter} & 24.17 & 0.87 & 0.26 & 81.34  & 0.0398 & 5.61 & 23.70 & 0.86 & 0.22 & 45.69 & 0.019 & 6.35 & 26.17 & 0.85 & 0.19 & 23.96 & 0.0058 & 6.49 \\
\emph{InstantPot} & 24.11 & 0.85 & 0.21 & 88.75  & 0.046 & 6.24 & 22.93 & 0.81 & 0.17 & 31.99 & 0.012 & 6.69 & 25.88 & 0.83 & 0.11 & 18.15 & 0.007 & 7.05\\ 
\emph{Jar} & 23.19 & 0.84 & 0.28 & 76.78  & 0.034 & 6.25 & 22.34 & 0.81 & 0.21 & 45.70 & 0.020 & 6.87 & 25.38 & 0.83 & 0.15 & 27.25 & 0.01 & 6.56\\
\emph{TheraGun} & 24.13 & 0.87 & 0.26 & 74.15 & 0.035 & 5.32 & 23.69 & 0.86 & 0.22 & 42.17 & 0.019 & 5.77 & 27.5 & 0.87 & 0.15 & 19.16 & 0.0049 & 6.18\\
\emph{Robot} & 24.29 & 0.85 & 0.26 & 68.51 & 0.029 & 6.13 & 23.56 & 0.83 & 0.20 & 44.08 & 0.020 & 6.47 & 27.80 & 0.84 & 0.13 & 23.17 & 0.008 & 6.54\\
\emph{GlassPotLid} & 23.59 & 0.72 & 0.37 & 101.59  & 0.06 & 4.15 & 22.93 & 0.69 & 0.38 & 68.50 & 0.032 & 4.89 & 24.04 & 0.66 & 0.27 & 31.74 & 0.012 & 4.76\\
\emph{LanternOff} & 23.03 & 0.79 & 0.26 & 83.89 & 0.043 & 6.52 & 21.94 & 0.75 & 0.25 & 43.95 & 0.014 & 7.32 & 24.12 & 0.78 & 0.16 & 24.32 & 0.01 & 7.29\\
\emph{LanternOn} & 22.67 & 0.78 & 0.27 & 83.22  & 0.045 & 6.69 & 21.82 & 0.75 & 0.25 & 43.63 & 0.016 & 7.71 & 24.05 & 0.78 & 0.16 & 23.42 & 0.0088 & 7.20\\
\emph{LegoBus} & 24.14 & 0.78 & 0.26 & 49.78  & 0.021 & 6.74 & 22.47 & 0.73 & 0.23 & 26.81 & 0.006 & 6.58 & 24.33 & 0.74 & 0.14 & 17.05 & 0.0061 & 6.33\\
\emph{MacBook} & 21.50 & 0.80 & 0.28 & 72.38 & 0.038 & 5.2 & 21.04 & 0.78 & 0.25 & 37.34 & 0.015 & 5.88 & 24.62 & 0.80 & 0.15 & 23.21 & 0.0092 & 6.10\\
\emph{SquareGlassJar} & 23.68 & 0.80 & 0.25 & 83.19  & 0.043 & 6.16 & 22.70 & 0.74 & 0.25 & 54.26 & 0.019 & 7.39 & 24.82 & 0.78 & 0.15 & 25.89 & 0.0102 & 7.52\\
\emph{StainlessSteelHotpot} & 21.77 & 0.72 & 0.38 & 95.54  & 0.053 & 4.91 & 21.35 & 0.71 & 0.37 & 57.23 & 0.022 & 5.83 & 24.08 & 0.69 & 0.28 & 25.92 & 0.0078 & 5.27\\
\emph{WaterBoiler} & 22.59 & 0.79 & 0.26 & 86.14 & 0.034 & 7.92 & 21.57 & 0.76 & 0.25 & 47.79 & 0.016 & 8.84 & 23.62 & 0.78 & 0.16 & 23.54 & 0.0079 & 7.80\\
\midrule
Average & 23.33 & {\bf 0.81} & 0.28 & 81.87  & 0.041 & 6.04 & 22.50 & 0.78 & 0.25 & 45.36 & 0.018 & {\bf 6.69} & {\bf 25.10} & 0.79 & {\bf 0.17} & {\bf 23.42} & {\bf 0.0081} & 6.62 \\
\bottomrule
\end{tabular}
}

\label{table:1}
\end{table}

Table~\ref{table:2} compares ours against Nerfacto baseline on public Mip-NeRF 360 dataset. Our approach outperforms Nerfacto in distribution metric -- similar to our observations in Table~\ref{table:1}. However, the Nerfacto's similarity metric performs better especially in PSNR and SSIM. The performance gap in similarity metric could be due to fact that we fine-tune the DM which was already trained on ObjectScans and the new fine-tuned model struggles to generalize. To evaluate the generalization hypothesis, we trained the DM only on Mip-NeRF 360 sequences and evaluated the performance on \emph{counter} sequence. We observe performance improvement in our approach (PSNR = 23.79, SSIM = 0.70, LPIPS = 0.25) when compared to baseline (PSNR = 22.90, SSIM = 0.82, LPIPS = 0.37) in terms of similarity metrics (particularly PSNR and LPIPS). We leave the out-of-distribution generalization of our approach for the future work.   

\begin{table}[h!]
\centering
\caption{\small\textit{Quantitative results on Mip-NeRF 360 dataset.}}
\resizebox{0.75\textwidth}{!}{%
\begin{tabular}{c rrraaa rrraaa}
\toprule
 & \mc6c{Nerfacto} 
 & \mc6c{Ours}\\
 \cmidrule{2-7}
 \cmidrule{8-13}
%  \cmidrule{14-19} 
%  \cmidrule{10-13} 
%  \cmidrule(l){8-9}
  Dataset & PSNR$\uparrow$ &SSIM$\uparrow$ &LPIPS$\downarrow$ & FID$\downarrow$ & KID$\downarrow$ & IS $\uparrow$
  &PSNR$\uparrow$ &SSIM$\uparrow$ &LPIPS$\downarrow$ & FID$\downarrow$ & KID$\downarrow$ & IS $\uparrow$\\
  
\midrule
\emph{bicycle} & 23.15 & 0.69 & 0.26 & 48.10 & 0.0084 & 4.21 & 21.24 & 0.47 & 0.21 & 26.17 & 0.0027 & 4.03 \\
\emph{bonsai} & 29.77 & 0.92 & 0.07 & 31.21 & 0.0154 & 3.98 & 24.02 & 0.75 & 0.10 & 20.25 & 0.0066 & 4.19\\
\emph{counter} & 22.90 & 0.82 & 0.37 & 102.72 & 0.048 & 7.28 & 20.87 & 0.64 & 0.30 & 23.50 & 0.012 & 6.96 \\
\emph{garden} & 24.62 & 0.70 & 0.38 & 111.56 & 0.0634 & 4.89 & 22.11 & 0.64 & 0.24 & 23.23 & 0.0074 & 4.34\\
\emph{kitchen} & 28.17 & 0.83 & 0.18 & 55.51 & 0.026 & 5.38 & 23.74 & 0.65 & 0.16 & 20.93 & 0.0057 & 5.21\\
\emph{room} & 30.84 & 0.87 & 0.27 & 80.38 & 0.0312 & 5.36 & 25.08 & 0.69 & 0.24 & 42.72 & 0.01 & 7.38\\
\emph{stump} & 25.95 & 0.71 & 0.24 & 65.51 & 0.055 & 2.64 & 22.94 & 0.50 & 0.26 & 25.04 & 0.011 & 3.00 \\
\midrule
% Average & \textbf{25.93} & \textbf{0.78} & 0.29 & 79.65 & 0.0354 & 5.42 & 22.6 & 0.62 & \textbf{0.23 }& \textbf{27.31} & \textbf{0.0075} & \textbf{5.58}\\
Average & \textbf{26.48} & \textbf{0.79} & 0.25 & 70.72 & 0.0353 & 4.81 & 22.85 & 0.62 & \textbf{0.22 }& \textbf{25.97} & \textbf{0.0078} & \textbf{5.01}\\
\bottomrule
\end{tabular}
}
\label{table:2}
\end{table}

\begin{figure}
     \centering
    %  \resizebox{0.9\textwidth}{!}{%
     \begin{subfigure}[b]{0.9\linewidth}
         \centering
         \includegraphics[width=0.9\textwidth]{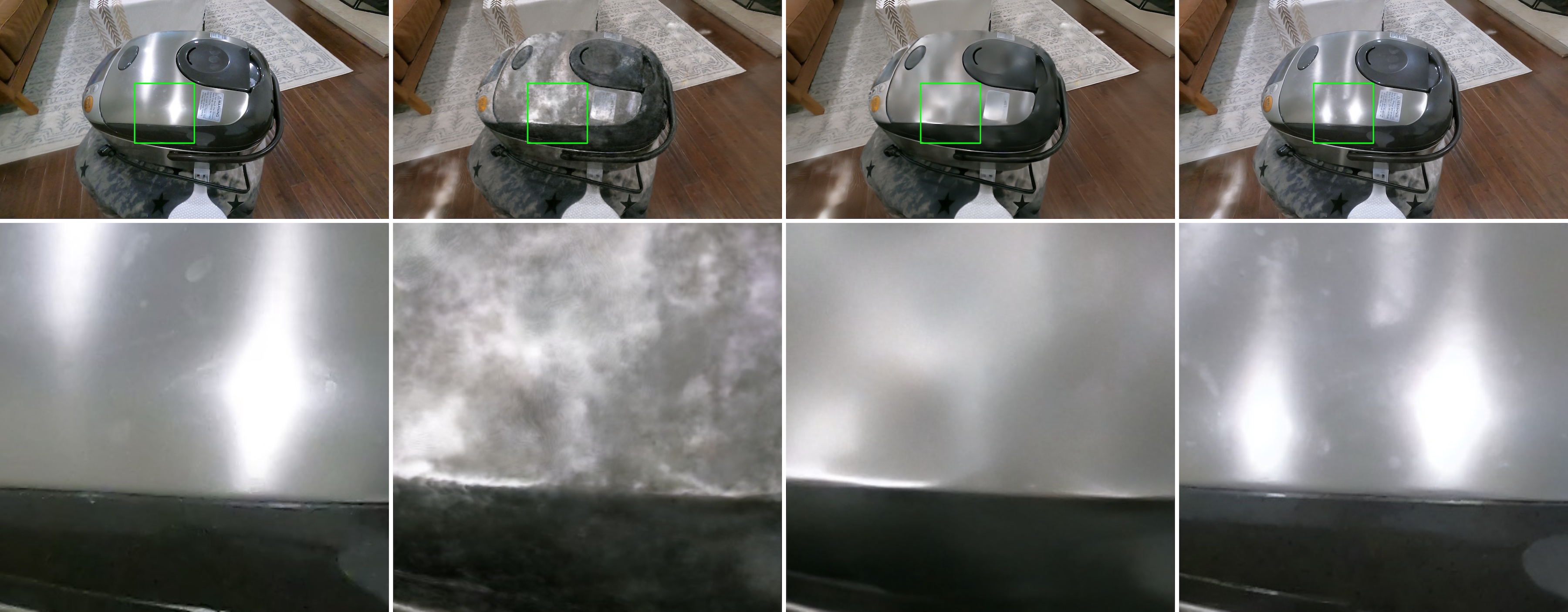}
     \end{subfigure}
     \hfill
     \begin{subfigure}[b]{0.9\linewidth}
         \centering
         \includegraphics[width=0.9\textwidth]{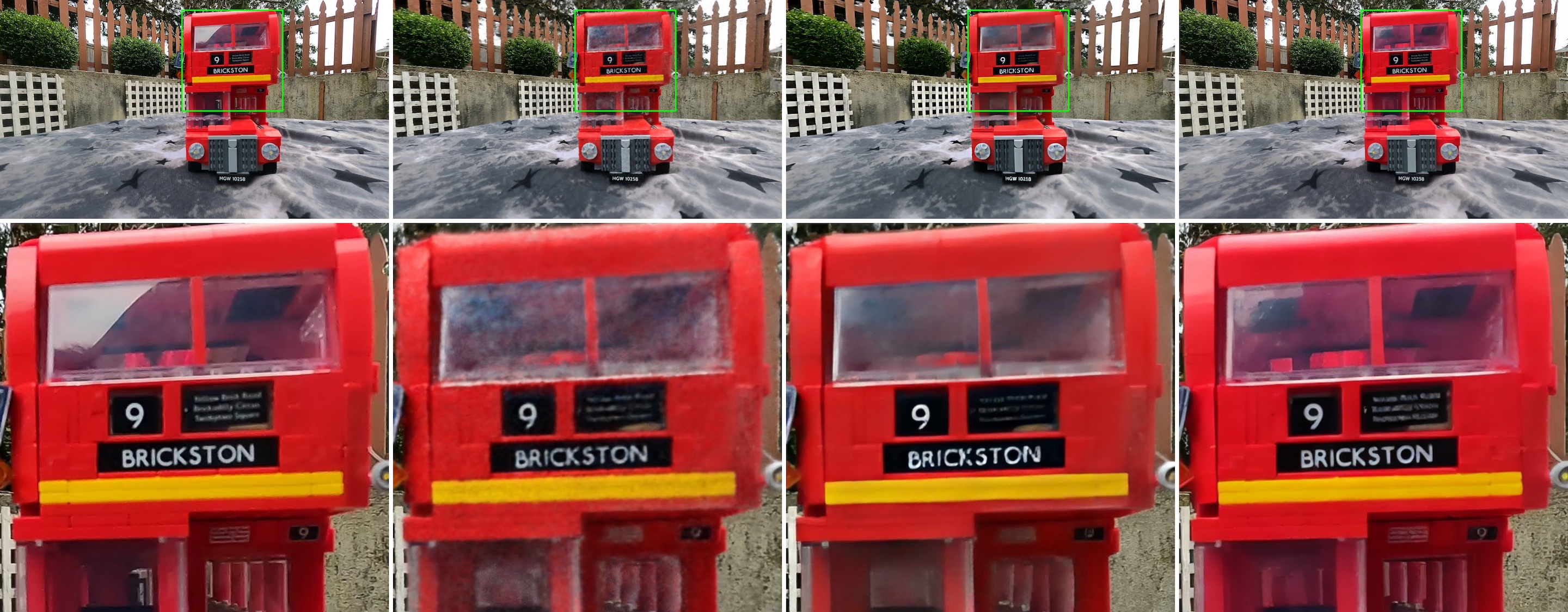}
     \end{subfigure}
     \hfill
     \begin{subfigure}[b]{0.9\linewidth}
         \centering
         \includegraphics[width=0.9\textwidth]{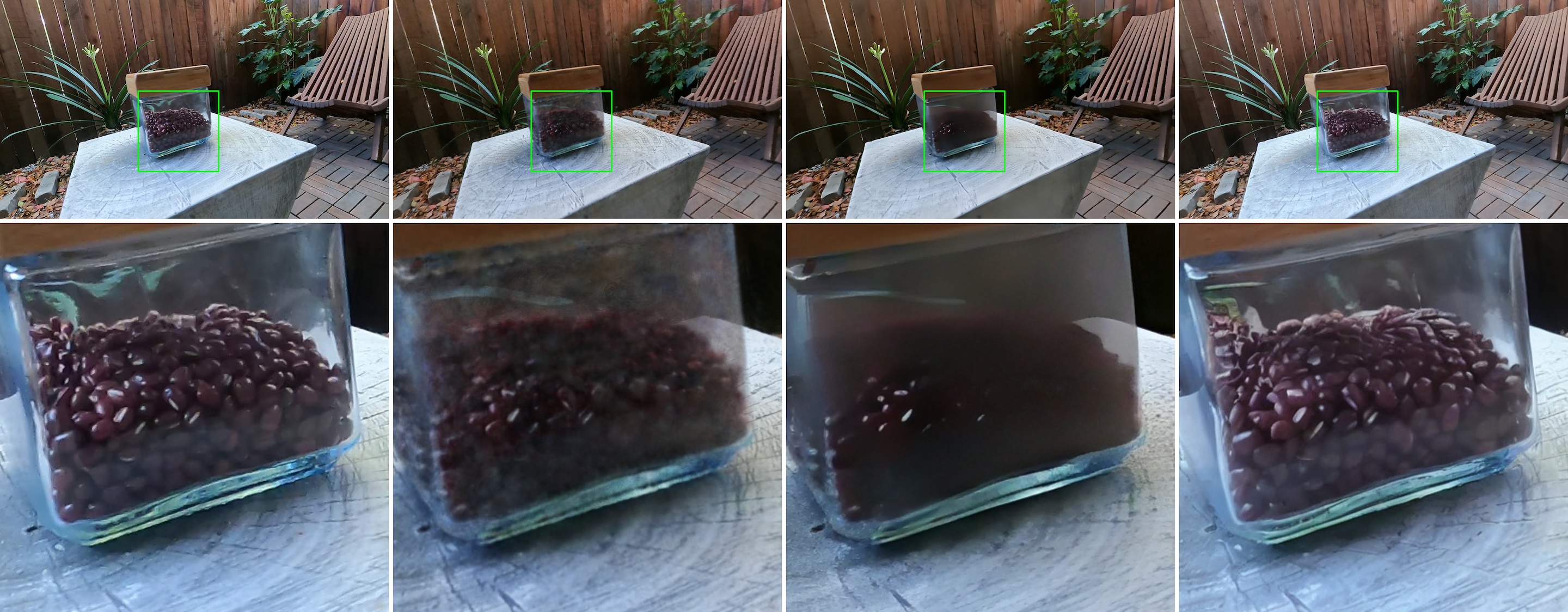}
     \end{subfigure}
     \hfill
     \begin{subfigure}[b]{0.9\linewidth}
         \centering
         \includegraphics[width=0.9\textwidth]{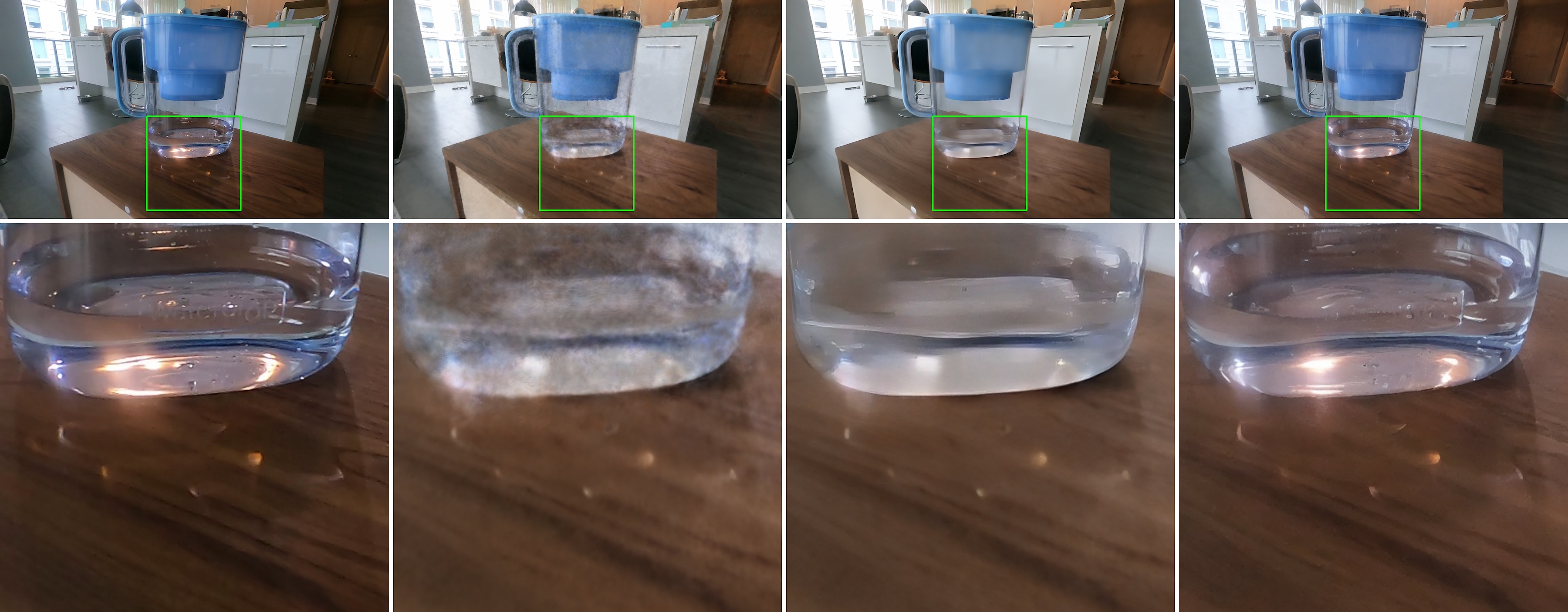}
     \end{subfigure}
    %  }
        \caption{\small \textit{Visual comparison} of the proposed approach against baselines on our datasets. \textbf{Left to right}: ({\bf a}) ground-truth reference image, ({\bf b}) NeRF rendering, ({\bf c}) NeRF rendering augmented by Real-ESRGAN~\cite{wang2021real} -- SOTA in image restoration, and ({\bf d}) NeRF rendering augmented by our method. \textbf{Top to bottom}: 4 samples with corresponding zoom-in views. Note, the goal is \textit{not} to make the output exactly the same as the ground truth, \textit{but to generate a plausible image of the underlying scene given an imperfect sample (column two)}.}
        
        \label{fig:visual_comparison_object_scan}
        
\end{figure}

\begin{figure}
    \centering
     \begin{subfigure}[b]{0.9\linewidth}
         \centering
         \includegraphics[width=\textwidth]{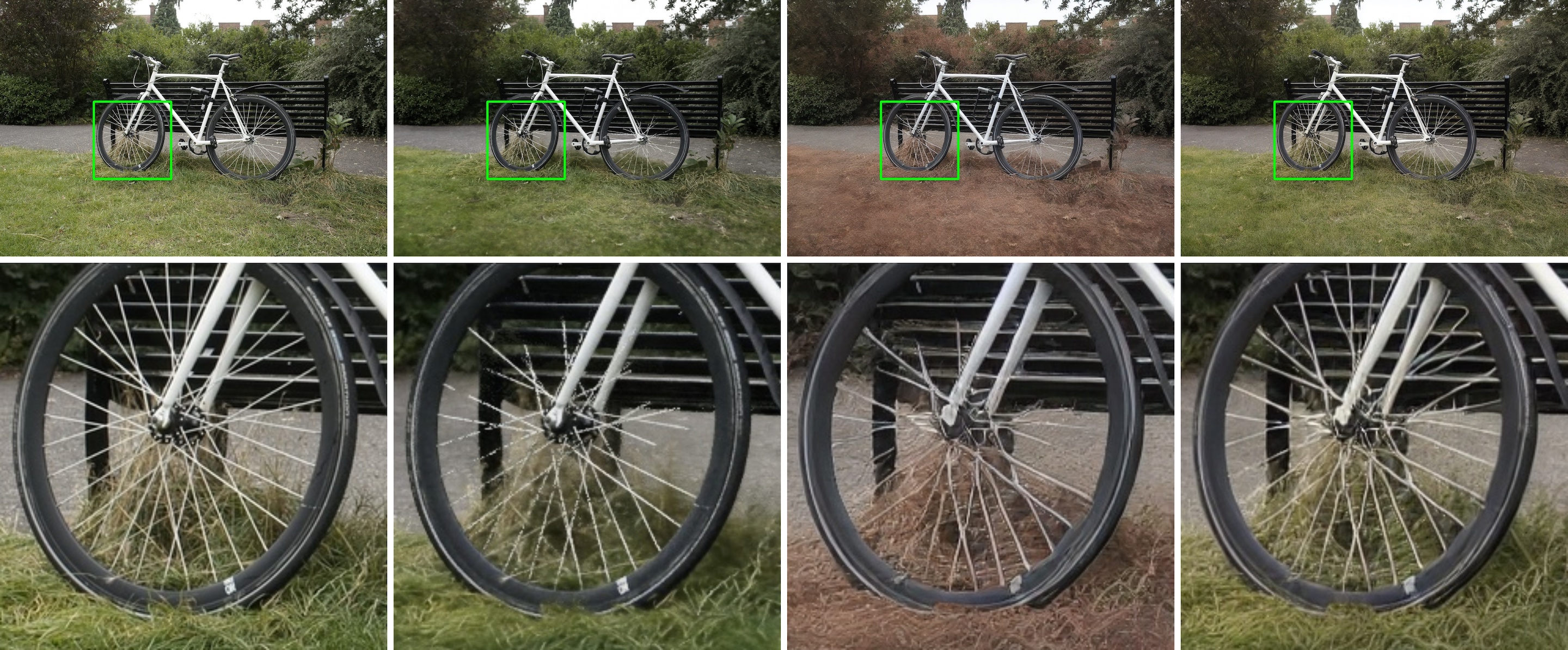}
     \end{subfigure}
     \hfill
     \begin{subfigure}[b]{0.9\linewidth}
         \centering
         \includegraphics[width=\textwidth]{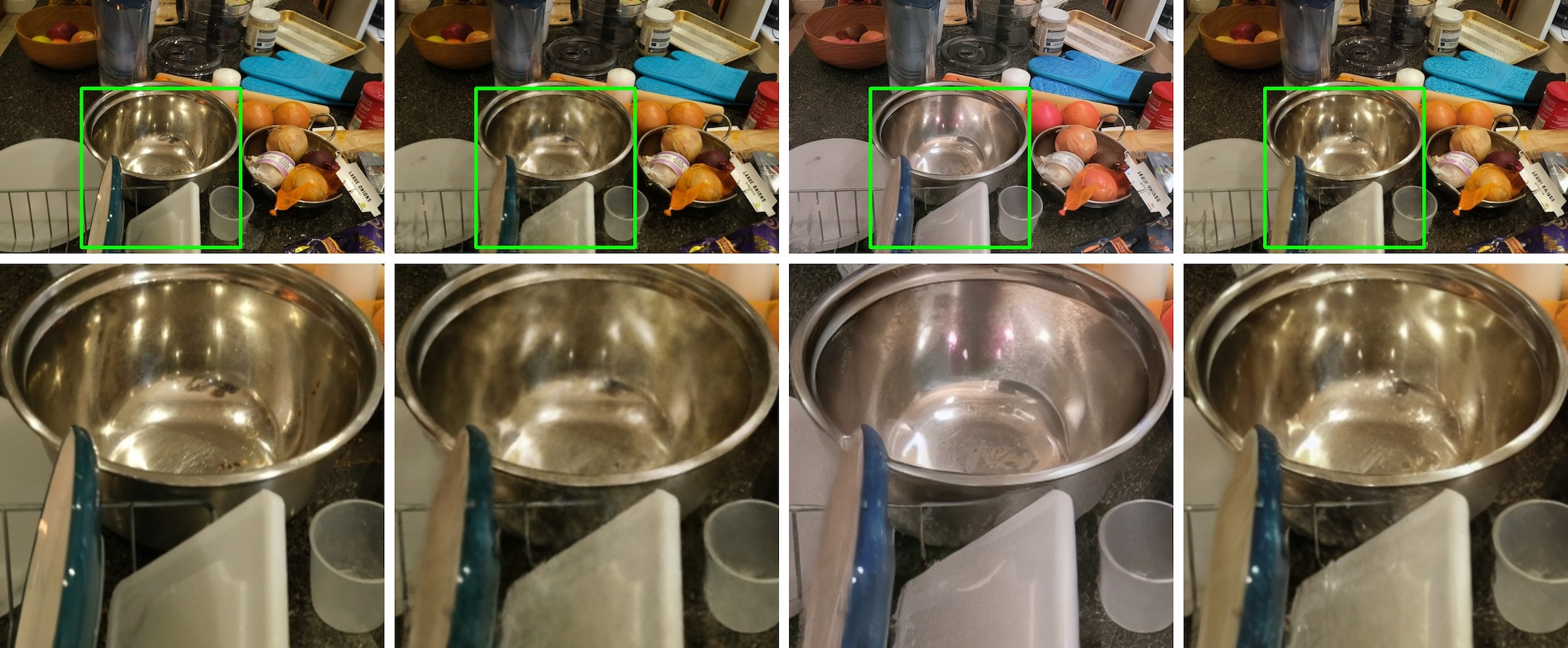}
     \end{subfigure}
    \caption{\small\textit{Qualitative results} of the proposed approach on the public Mip-NeRF 360 dataset. \textbf{Left to right}: ({\bf a}) ground-truth reference image, ({\bf b}) NeRF rendering, ({\bf c}) rendering augmented by diffusion model pre-trained on our ObjectScans dataset, and ({\bf d}) rendering augmented by the diffusion model in ({\bf c}) which is further fine-tuned on Mip-NeRF 360 dataset. Bottom row shows each sample's corresponding zoom-in view. 
   }
    \label{fig:visual_comparison_mipnerf360}
\end{figure}

\subsection{Qualitative results}

Fig.~\ref{fig:visual_comparison_object_scan} presents qualitative results on ObjectScans. NeRFs  to model fine-grained details and glossy surfaces which can be seen in column two. A generic image denoising approach such as Real-ESRGAN fine-tuned on our dataset is able to remove some artifacts but fails to extrapolate structures beyond the Nyquist frequency, resulting in over-smoothing. The proposed approach removes artifacts without oversmoothing. \textit{Furthermore, the proposed approach is able to hallucinate realistic details which may be absent in the scene but nevertheless compatible with the given data and with the natural image statistics}, for example, the reflection on the rice cooker (top rows) and highlights at the bottom of the water filter (bottom rows).

Fig.~\ref{fig:visual_comparison_mipnerf360} presents on qualitative results Mip-NeRF 360. The scenes in Mip-NeRF are out-of-distribution relative to  ObjectScans, and thus the DM pre-trained on the latter is biased to generates samples with inconsistent color distribution. Yet, the generated samples are realistic, which is the goal of the DM, and compatible with the data, which is the goal of the NeRF.  By fine-tuning the DM on Mip-NeRF, we are able to match the color distribution, and also fill in fine-grained details missing in NeRF's renderings (column two). Note the added details such as reflections on the bowl, not necessarily present in the scene but nevertheless realistic.

\subsection{Super-resolution}
Fig. \ref{fig:supp-nerf-artifacts} illustrates the extrapolation characteristics of our NeRF Diffusions. For a given test sample with ground-truth image and camera pose available, we first obtain the NeRF rendering and its corresponding NeRF Diffusion output. Also, we sample additional camera poses near the test pose closer to the object to obtain the ``zoomed-in" version of the NeRF rendering. In the case of Fig. \ref{fig:supp-nerf-artifacts}, we focus on rendering the steel vessel and oil bottle. Since NeRFs fail to interpolate at finer resolution, the NeRF's rendering would have more artifacts. Through feeding this aliased NeRF rendering to a DM, we obtain visually plausible and enhanced scenes through DM's hallucinations/extrapolation. 

\subsection{Limitations}
It is known that diffusion models require considerable amounts of training data.  As can be seen in Fig.~\ref{fig:visual_comparison_mipnerf360} column 3, the sample image generated by our model is biased towards the data distribution that the model has been trained on, since our models are trained on small data. Fine-tuning the model on a specific data domain or training the diffusion model on larger dataset can mitigate the problem.

Another problem that challenges diffusion models is their inability to model fine-grained structure like text,  which we also observed in (Fig.~\ref{fig:limitation_texts}). Other limitations include sampling speed and high computation costs. While ``hallucination'' is a problem in synthetic data generation, in our case it is the goal, so long as the NeRF Diffusion remains faithful to the data where available. The synthesis is tasked to add details that are compatible with natural image statistics, without affecting the rendering where information is manifest in the given data.

Note that, in this aspect, world models are fundamentally different from language models, due to {\em scale variability in the data} which is absent in language. Given a sentence, there are no words revealed as we change the measurement conditions. Given an image, on the other hand, there are always more details about the scene to be revealed if we move closer to objects. These details are not inferrable from the given images by interpolation, which is why a Foundation Model for visual scenes must incorporate a recursion mechanism of other mechanism to go to the limit. 

\begin{figure}
    \centering
    \includegraphics[width=0.8\textwidth]{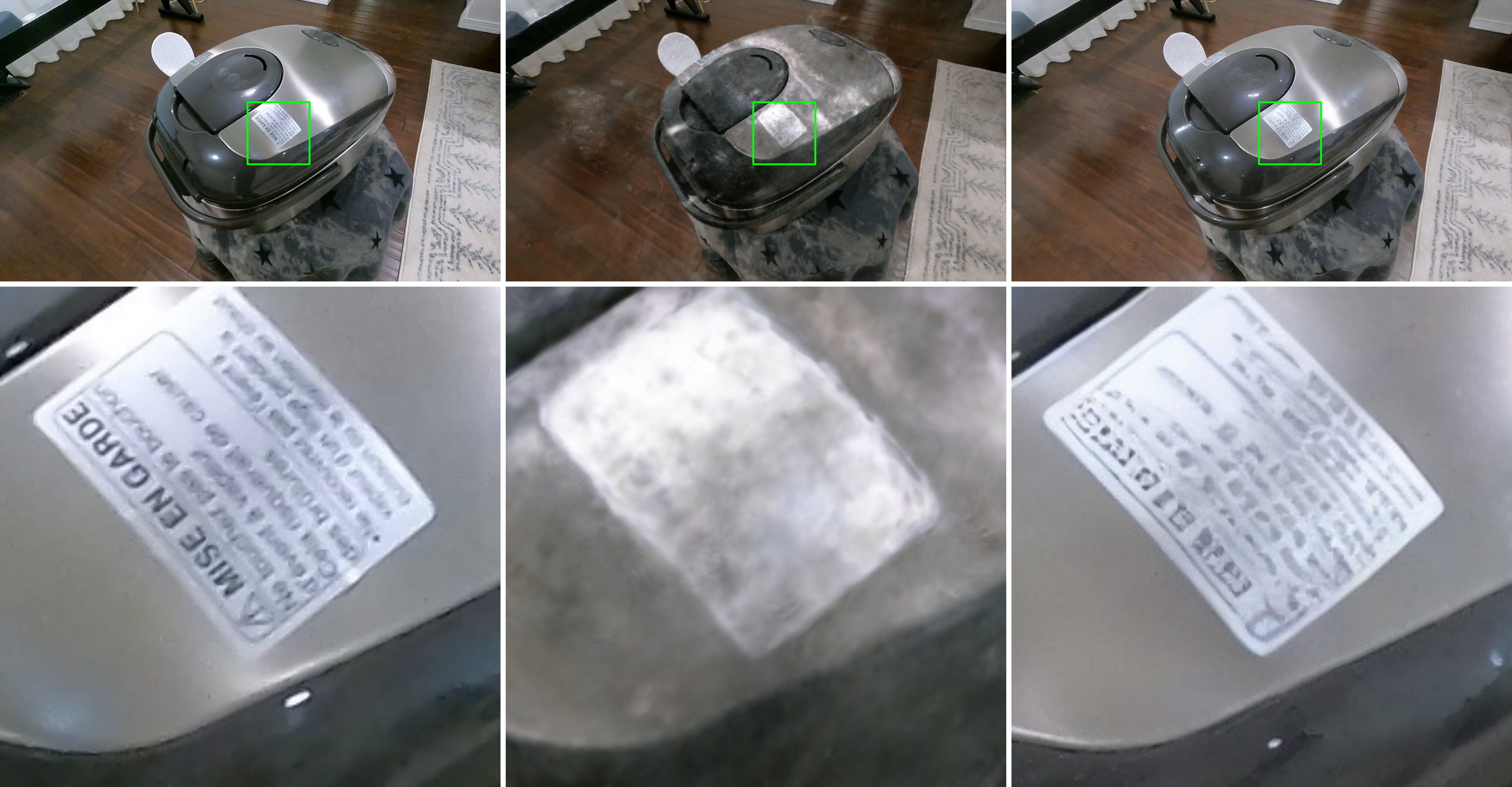}
    \caption{\small\textit{A known challenge} for diffusion models is fine-scale generation, for instance text. \textbf{Left to right}: (a) ground-truth image, (b) NeRF rendering, (c) NeRF rendering augmented by our approach. The bottom row shows zoomed-in views of the top row. Though our approach fills in some high-frequency signals that NeRF failed to capture, it is not capable of hallucinating meaningful text that can be seen in the ground-truth image.}
    \label{fig:limitation_texts}
\end{figure}

\section{Related work}
\label{sec:related}
Diffusion models have significantly advanced image generation offering a scalable and robust training paradigm in either pixel~\cite{ramesh2022hierarchical, saharia2022photorealistic} or latent space~\cite{rombach2022high}, and taking image generation to a new level of realism and diversity. Interest has grown in extending diffusion models to 3D content creation~\cite{wang2022score, xu2022neurallift,  poole2022dreamfusion, zhou2022sparsefusion, lin2022magic3d, melas2023realfusion, karnewar2023holodiffusion}, usually by guiding NeRF training with natural image priors from 2D diffusion models.
DiffusioNeRF~\cite{wynn2023diffusionerf} aims to use a pre-trained denoising diffusion model to regularize NeRF trained on only a few images.
Nerfbusters~\cite{warburg2023nerfbusters} attempts to use diffusion models to remove the the ghosting effects (\textit{a.k.a.} floaters) from NeRF rendering. 

Designing a viable scene representation, or creating a map $y=h(S, c)$ for image prediction has been pursued in different contexts or research fields. Some focus on augmenting existing NeRF models by introducing additional complexity to the map $h$ to (hopefully) circumvent known limitations of the vanilla model such as aliasing or non-Lambertian effects~\cite{mildenhall2021nerf,barron2021mip, barron2022mip, verbin2022ref, mildenhall2022nerf}. Some focus on directly learning an end-to-end denoising or super-resolution network to recover the local statistics of generated images~\cite{ulyanov2018deep,wang2018esrgan}. Those works validate our proposition that a scene is an abstract concept specified via a computational procedure, which has no objective grounding. 

Specifically on NeRFs, 
\label{sect:related-works-nerfs}
\cite{mildenhall2021nerf} and related work~\cite{barron2021mip, barron2022mip, verbin2022ref, mildenhall2022nerf} are state-of-the-art approaches for novel view synthesis. NeRF renders images by encoding density and view-dependent radiance based on multi-layer perceptron (MLP). However, there are several drawbacks. First, training with varying scene resolution leads to aliasing because any 3D scene point is infinitesimally small regardless of its distance to the camera. Mip-NeRF~\cite{barron2021mip} attempts to solve this problem by modeling the 3D scene point using a frustum. The second challenge is with modeling non-Lambertian scenes that include reflection and refraction. Ref-NeRF~\cite{verbin2022ref} addresses reflection modeling by extending the NeRF MLP to predict surface normals and reflection properties (e.g., roughness) in order to calculate reflection direction for specular albedo estimation; this is added to the diffuse color for final rendering. Similarly, NeRFRen addresses the reflection problem by separating transmitted and reflected color. To the best of our knowledge, there is little work on refraction modeling within the volumetric rendering literature. Refraction modeling is addressed based on multi-plane image~\cite{wizadwongsa2021nex} or light field~\cite{suhail2022light}. The next drawback of NeRFs is on their computationally intensive volumetric rendering process. Since we have to query the MLP for every sample point along every camera ray, rendering a single image takes more than several seconds, making it impossible for interactive real-time rendering. Instant-NGP~\cite{muller2022instant} drastically accelerates NeRF rendering by encoding spatial information using hash maps, which reduce the MLP size and enable faster rendering. Nerfstudio~\cite{nerfstudio} is an open-source implementation for various NeRF approaches, including Instant-NGP. 
We have validated that Nerfstudio yields state-of-the-art accuracy within reasonable training time (within half an hour using Tesla V100) and thus used Nerfstudio through out our experiments.

\label{sect:related-works-dataset}
As for datasets, popular choices 
used in the NeRF literature are either synthetic (\textit{e.g.}, NeRF-Synthetic~\cite{mildenhall2021nerf} and BlendedMVS~\cite{yao2020blendedmvs}) or only contain a handful of real-world scenes (\textit{e.g.}, Mip-NeRF 360~\cite{barron2022mip}, LLFF~\cite{mildenhall2019local}). More recent datasets such as MobileBrick~\cite{li2023mobilebrick} and ScanNeRF~\cite{de2023scannerf} contain a few dozen sequences in a laboratory setting: The former captures RGB-D data of LEGO models with known ground-truth CAD models and the latter uses specialized hardware to capture small objects limiting their applicability in real-world scenarios. ScanNet~\cite{dai2017scannet} contains thousands of RGB-D scans of rooms, but its applicability to train NeRFs is yet to be explored. CO3D~\cite{reizenstein2021common} contains about 19K crowd-sourced videos, but focuses on object reconstruction task and as such often has simplistic background. To fill the gap left by existing public datasets, we collected a relatively large dataset using widely available commodity hardware, and conducted some of our experiments using this dataset.

\section{Discussion}
\label{sec:discussion}

As our title suggests, we consider ours only a first step towards building Foundational Models of physical scenes. Nonetheless, to the best of our knowledge, this paper is the first to propose a practical method to represent physical scenes that goes beyond naive realism, dominant since Marr \cite{marr80}, embracing the analytical approach advocated by Koenderink \cite{koenderink2011vision}, formalized and implemented with contemporary Deep Learning tools. 

Evaluating methods that generate details that are not manifest in the given data makes assessment challenging, as all existing quantitative benchmarks fall into the objectivity trap, by  effectively {\em defining} an objective reality where there is none. At the same time, perceptual similarity metrics, which are biased towards the characteristics of the human visual system, are still quite rudimentary, often measuring limited variability (e.g. SSIM) or other coarse distributional statistics (FID, KID, IS). Considerably more work needs to be conducted to develop methods that assess the realism of generated images, as opposed to their fit to ``reality'' defined by an arbitrary benchmark design. We note that our analysis is limited to vision, and must be extended to other modalities, and in particular for active perception by probing the environment with structured signals or contact sensors. We leave these extensions to future work.

%\bibliographystyle{abbrv}
%\bibliography{reference}

\appendix

\begin{figure}[h]
    \centering
    \includegraphics[width=0.99\textwidth]{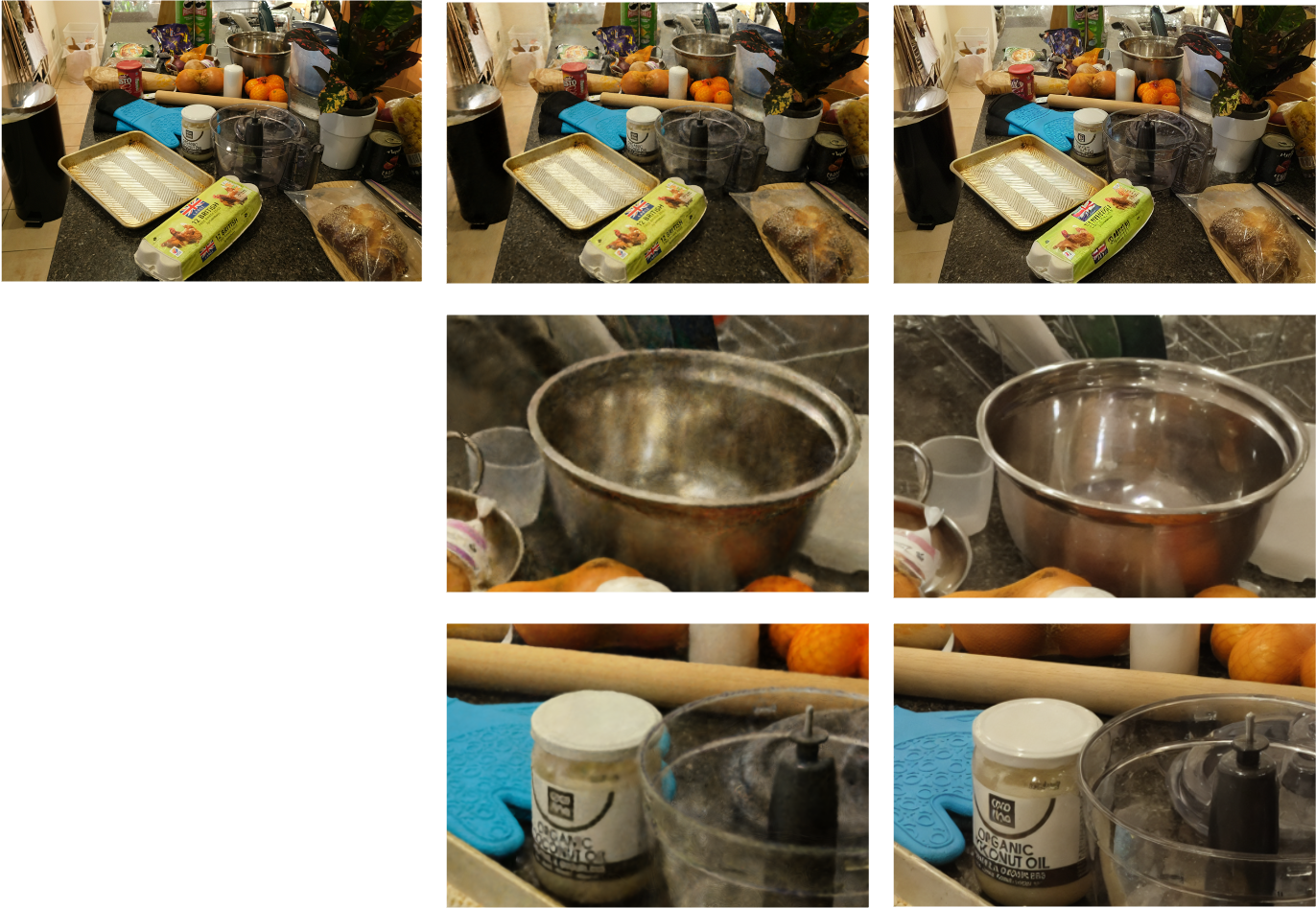}
    \caption{NeRFs cannot interpolate at finer resolution without visible artifacts, that betray the NeRF as not equivalent to other scenes, in the sense of Theorem \ref{thm:uniqueness-scene}, and therefore non viable as a representation of the physical scene.
    \textbf{Left to right}: ({\bf a}) ground-truth reference image, ({\bf b}) NeRF rendering, ({\bf c}) NeRF Diffusion output. The top row corresponds to test samples of the \texttt{counter} sequence in Mip-NeRF 360 dataset. In the second and third rows, we randomly sample zoomed-in camera poses around the object and obtain corresponding NeRF rendering (column {\bf b}) and NeRF Diffusion output (column {\bf c}).}
    \label{fig:supp-nerf-artifacts}
\end{figure}

\begin{figure}[h]
    \centering
    \includegraphics[width=\textwidth]{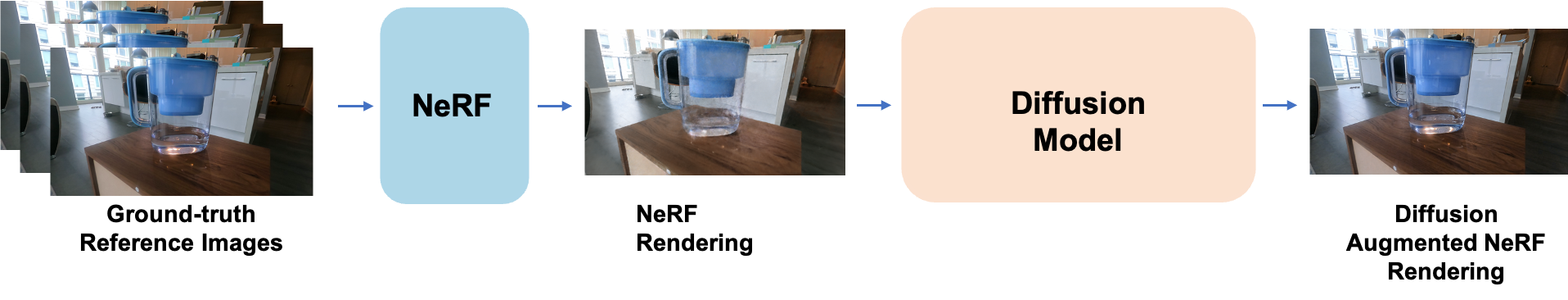}
    \caption{ A block diagram of our simple model. }
    \label{fig:block_diagram}
\end{figure}

\section{Q\&A}

In this section, we discuss some of the most controversial elements of our paper.

\begin{itemize}
    \item {\bf Q:} {\bf What does this paper have to do with physical scenes? There is no talk of surfaces, reflectance, motion, any of the aspects described as critical for interaction with physical space.}
    \item {\bf A:} Whatever properties we seek to infer about the scene, including geometry, photometry, dynamics, etc., can only be inferred if they are manifest in the data. Any representation that can be inferred from the data and is {\em maximally informative} (sufficient, per Definition \ref{defn:scene}), can be used to {\em infer any properties or attributes of the scene, even if they are not explicit in the representation.} 
    \item {\bf Q:} {\bf But there is no way to pinpoint 3D shape in the parameters of a NeRF or a Diffusion Model. Where is the shape?}
    \item {\bf A:} Shape information (not the shape itself) is implicit in the parameters of the trained model, and in the activations after feeding all the suitable conditioning inputs. The real question to ask, here, is {\em where is the shape in the ``real scene''?} Describing the geometry of a scene as a collection of piecewise smooth and multiply-connected surfaces, the standard in the so-called Marrian approach to Vision \cite{koenderink2011vision}, is a gross oversimplification that excludes co-dimension one structures (wires), diffuse structures (clouds), and even looking closely at some surfaces ({\em e.g.}, fabric, hair) it is not clear that there actually is a surface at all, depending on the scale of observation. So, the description of a scene as a configuration of surfaces, although useful in some applications such as robotic navigation, has no more ontological value than the parameters of a trained model, since the latter can be used to infer the former, but not vice-versa. Once restricted to a collection of reflective surfaces, an inferred scene cannot, for instance, be used to model visibility and illumination  artifacts such as transparency, translucency, and inter-reflections. In fact, these are considered ``outliers'' in traditional 3D scene modeling. They are, on the other hand, captured in the representations we have described. 
    \item {\bf Q:} {\bf You refer to extrapolation as hallucination, but these are two distinct phenomena! Hallucination makes up stuff that is not there in reality.}
    \item {\bf A:} So does extrapolation. Given a set of data, any form of extrapolation requires some kind  of prior, regularizer, or inductive bias, which in a trained model is embodied by the training set, the architecture, the training loss, and the optimization method. These enable one to fill in details not visible in one scene {\em using information from other scenes}. In standard ill-posed inverse problems, such extrapolation is fostered by generic regularizers ({\em e.g.}, minimal curvature, maximum sparsity, etc.), but neither the validity of the prior nor the inductive transfer are falsifiable because they are tasked with imputing information from scenes {\em other than the one in question}, which does not exist in the latter. 
    \item {\bf Q:} {\bf In what sense is this a Foundation Model? Foundation Models are supposed to be {\em homogeneous and task-agnostic} representations that can support any downstream tasks. NeRF Diffusions are neither homogeneous nor task-agnostic.}
    \item {\bf A:} Optimal representations are functions of the data that are minimal sufficient (known also as Sufficient Invariants \cite{achille2022binding}). This is a well-defined and defining criterion of a representation. Task-agnosticism, on the other hand, is misleading for the only representation that is truly task-agnostic is the data itself, or any lossless representation of it.  Even learning criteria branded as task-agnostic, such as contrastive learning and so-called ``self-supervised'' learning, are very much task-dependent: The task is implicit in the design choice of transformation, data augmentation, or surrogate loss. 
    \item {\bf Q:} {\bf Foundation Models are Transformer architectures. NeRFs and Diffusion Models are not Foundation Models.}
    \item {\bf A:} Transformers enjoy some desirable properties that make them suitable for use as Foundations, since they are Turing Complete, a characteristic needed to represent abstract concepts. However, they are not the only ones. So are RNNs and NeRF Diffusions, as we have shown here by proving that NeRF Diffusions can represent the physical scene, which is an abstract concept.
    \item {\bf Q:} {\bf You say that a NeRF cannot represent the scene because it is feed-forward, but an MLP enjoys the universal approximation properties of neural networks, so something is wrong.}
    \item {\bf A:} The Universal Approximation Theorem \cite{cybenko1989approximation} states that one can approximate a function {\em in a compact domain}. Scenes do not live in compact domains! If that was the case, we could discretize them and be done with it.
    \item {\bf Foundation Models are Large Language Models, how does this relate to LLMs?} Transformer architectures pre-trained as masked autoencoders or predictors, and then fine-tuned, possibly using reinforcement learning machinery, on completed sequences have certainly been successful in NLP. While vectorized tokens encode elements in a finite dictionary, there is nothing unique about them representing (sub-)words in a natural language. The same machinery can be used for visual data, consistent with all the derivations in this paper. A further relation is the LLMs, as foundational models for text, are trained to represent ``meanings'' which are equivalence classes of sentences \cite{soatto2023taming}. A visual foundation model should be trained to represent ``scenes,'' which as we saw are equivalence classes of images. The major different is that language admits a natural discretization, being born discrete. Images, on the other hand, cannot be quantized in a way that easily relates to the underlying scene, as the same object can appear smaller than a pixel or larger than the entire image depending on the relative pose to the viewer.
    \item {\bf The scene is something we evolve to interact with, this paper does not talk about interaction, multimodality, and all of that. Would any of these ideas survive if we expand the scope to a more realistic setting?}
    \item {\bf A:} In theory, our derivation pertains to any data modality, so it is general. In practice, however, there are modality-specific characteristics that we did not delve into here. For example, remote sensors such as vision behave differently from contact sensors such as touch, and localized modalities differently from diffuse ones such as smell. Most importantly, {\em active modalities} where the outcome of inference affect the data acquisition process are clearly essential for the development of cognitive abilities (plans do not have a central nervous system), which we do not explore here. 
    \item {\bf Q:} {\bf you say that the physical scene is an abstract entity, which is contradictory: If it is physical, how can it be abstract?} 
    \item {\bf A:} We have argued that the ``true'' scene, sometimes referred to as ``real'' or ``physical'' scene, cannot be known. So, we define ``physical scene'' as one that has some uniqueness properties, so that it can be consistent, if not objective, even if observed by multiple agents, each of which processes the data differently. Once so defined, the physical scene is a function of the data stored in the memory of a model, which implements a computable function, that embodies an abstract concept. In general, not just our definition, but any meaningful notion of scene that is inferred from data is an abstract concept, and attempts to define some sort of objectivity inevitably turn into tautologies, a point eloquently explained by Koenderink in his critique of the objective, or so-called Marrian,  account \cite{koenderink2011vision} as well as by Russell \cite{einstein-russell}.
    \item {\bf Q:} {\bf you also say that physical laws are abstract entities. How can that be?}
    \item {\bf A:} The laws of physics are a human construct, expressed in human language, that live inside the human brain and are communicated through abstract symbols. These symbols describe relations among quantities, such as $F = ma$, that are not ``real'' but rather abstract: There are no measurements of force, mass and acceleration that, if plugged into the above equation, would yield the equal sign. That equation, therefore, is not a relation among real measurable quantities, but rather abstractions that can be easily explained and communicated among humans. Trained models may develop their own inner language, as argued in \cite{trager2023linear}, and it is unclear whether that can be ``translated'' to human natural language so that a representation can ``explain'' its own version of laws of physics, but all this is beyond our scope here and discussed to some extent in \cite{achille2022binding}.
    \item {\bf Q: The way in which NeRFs are combined with Diffusion Models is sub-optimal.}
    \item {\bf A:} There are surely more sophisticated ways of combining NeRFs and Diffusion Models, but this is neither our goal nor the main contribution claimed. Our goal is to test whether even a simple concatenation of the two can suffice to represent a physical scene, in theory - as well as with a modicum of empirical validation. The contribution is not the validation, but the theoretical framing of the question.
    \item{\bf Q: The definition of scene is trivially satisfied by the images themselves. It is not clear how this representation is meaningful and how it may represent the actual physical scene.}
    \item {\bf A:} As we point out in Sect.~\ref{sec:presentations}, indeed a collection of images is a viable representation of a scene, since from them one can infer anything that can be inferred from the scene {\em given those images}. But this is not a physical scene. As pointed out in the rest of that section, physical scenes are {\em equivalence classes of scenes.} where the equivalence classes are defined by the {\em presentations} of the underlying physical scene, if it exists.
    \item{\bf Q: Why do you need to know the pose? Does the fact that the NeRF is built with posed images invalidate the theory?} 
    \item{\bf A:} In theory, pose is not needed to learn the plenoptic function, as it can be factored out by the model given sufficient training data. In practice, we already know that {\em a sparse attributed point cloud is minimal sufficient for localization} so long as (a) the attributes (a.k.a. local features descriptors) are sufficient to establish correspondence for a sufficiently large collection of pairs of views, and (b) the number of correspondences is sufficient to define a reference frame (which in turn defines pose) despite violation of the three conditions under which correspondence is possible, which is co-visibility (occlusion), Lambertian reflectance and constant illumination. So, we can simplify the training of the NeRF, by reducing sample complexity, simply bypassing the complex optimization involved in factoring out pose simply by estimating it and inputting it along with the images. If pose is inaccurate, the resulting error will generate a bias, which averages out across samples and is manifest in visible artifacts.
    \item{\bf Q: What if the scene  is not static? Is there an assumption that the scene is immutable across the sensor measurements?}
    \item{\bf A:} Yes, the scene is defined as what persists among different views. If a scene contains multiple moving ``objects'' (which we have not defined here as that entails some complexities), each object represents a scene of its own. If objects are simply-connected surfaces supporting Lambertian reflection, this can be done easily. If objects are translucent, transparent, or reflective, there are global dependencies that complicate the analysis, which we defer to future work. 
    \end{itemize}

\end{document}